\newtheorem{thm}{Theorem}
\DeclareMathOperator{\softmax}{softmax}
\DeclareRobustCommand\onedot{\futurelet\@let@token\@onedot}
\def\@onedot{\ifx\@let@token.\else.\null\fi\xspace}
\def\etal{\emph{et al}\onedot}
\newcommand{\Tref}[1]{Table~\ref{#1}}
\newcommand{\Eref}[1]{equation~(\ref{#1})}
\newcommand{\Erefs}[1]{equations~(\ref{#1})}
\newcommand{\Fref}[1]{Figure~\ref{#1}}
\newcommand{\bx}{\bm{x}}
\newcommand{\bo}{\bm{o}}
\newcommand{\bz}{\bm{z}}
\newcommand{\ba}{\bm{a}}
\newcommand{\bdelta}{\bm{\delta}}
\definecolor{myGreen}{rgb}{0, .8, .3}
\definecolor{myRed}{rgb}{0.8, .2, .2}
\begin{document}

\title{Robust Student Network Learning}

\author{Tianyu Guo, Chang Xu, Shiyi He, Boxin Shi,~\IEEEmembership{Member,~IEEE}, Chao Xu, and Dacheng Tao,~\IEEEmembership{Fellow,~IEEE}
\thanks{Tianyu Guo, Shiye He and Chao Xu are with the Key Laboratory of Machine Perception (Ministry of Education) and Coopertative Medianet Innovation Center, School of EECS, Peking University, Beijing 100871, P.R. China. E-mail: tianyuguo@pku.edu.cn, shiyihe@pku.edu.cn, xuchao@cis.pku.edu.cn.}
\thanks{Boxin Shi is with the National Engineering Laboratory for Video Technology, School of EECS, Peking University, Beijing 100871, P.R. China. E-mail: shiboxin@pku.edu.cn.}
\thanks{Chang Xu and Dacheng Tao are with the UBTech Sydney Artificial Intelligence Centre and the School of Information Technologies in the Faculty of Engineering and Information Technologies at The University of Sydney, J12 Cleveland St, Darlington NSW 2008, Australia. Email: c.xu@sydney.edu.au, dacheng.tao@sydney.edu.au.}}

\markboth{}%
{Shell \MakeLowercase{\etal}: Robust student network learning}

\maketitle

\begin{abstract}
Deep neural networks bring in impressive accuracy in various applications, but the success often relies on the heavy network architecture. Taking well-trained heavy networks as teachers, classical teacher-student learning paradigm aims to learn a student network that is lightweight yet accurate. In this way, a portable student network with significantly fewer parameters can achieve a considerable accuracy which is comparable to that of teacher network. However, beyond accuracy, robustness of the learned student network against perturbation is also essential for practical uses. 
Existing teacher-student learning frameworks mainly focus on accuracy and compression ratios, but ignore the robustness. In this paper, we make the student network produce more confident predictions with the help of the teacher network, and analyze the lower bound of the perturbation that will destroy the confidence of the student network. Two important objectives regarding prediction scores and gradients of examples are developed to maximize this lower bound, so as to enhance the robustness of the student network without sacrificing the performance. Experiments on benchmark datasets demonstrate the efficiency of the proposed approach to learn robust student networks which have satisfying accuracy and compact sizes.
\end{abstract}

\begin{IEEEkeywords}
deep learning; teacher-student learning; knowledge distillation;
\end{IEEEkeywords}

\IEEEpeerreviewmaketitle

\section{Introduction}

\IEEEPARstart{R}{ecent} years have witnessed the marked progress of deep learning. Since the breakthrough in 2012 ImageNet competition~\cite{russakovsky2015imagenet} achieved by AlexNet~\cite{krizhevsky2012imagenet} using five convolutional layers and three fully connected layers, a series of more advanced deep neural networks have been developed to keep rewriting the record, \emph{e.g.}, VGGNet~\cite{simonyan2014very}, GoogLeNet~\cite{szegedy2016rethinking}, and ResNet~\cite{he2016deep}. However, their excellent performance requires the support from a huge amount of computation.  For instance, AlexNet~\cite{simonyan2014very} contains about 232 million parameters and needs $7.24 \times 10^8$ multiplications to process an image with resolution of $227 \times 227$. Hence, the potential power of deep neural networks can only be fully unlocked on high performance GPU servers or clusters. In contrast, majority of the mobile devices used in our daily life usually have rigorous constraints on the storage and computational resource, which prevents them from fully taking advantages of deep neural network. As a result, networks with smaller hardware demanding while still maintaining similar accuracies are of great interests to image processing and computer vision community.
  
Compressing convolutional neural networks can be achieved by vector quantization~\cite{gong2014compressing}, decomposing weight matrices~\cite{denton2014exploiting}, and encoding with hashing tricks~\cite{chen2015compressing}.  Unimportant weights can be pruned to achieve the same goal by removing the subtle weights~\cite{han2015learning,han2015deep}, reducing the redundancy between weights in the frequency domain~\cite{wang2016cnnpack}, and using the binary networks~\cite{courbariaux2016binarized,rastegari2016xnor}. Another straightforward approach is to design a compact network directly, \emph{e.g.}, ResNeXt~\cite{xie2017aggregated}, Xception network~\cite{chollet2017xception}, and MobileNets~\cite{howard2017mobilenets}. These networks are often deep and thin with fewer parameters in each layer, and the non-linearity of these networks are strengthened by increasing the number of layers, which guarantees the performance of the network.

Student-teacher learning framework, introduced in knowledge distillation (KD)~\cite{hinton2015distilling}, is one of the most popular approaches to realize model compression and acceleration~\cite{courbariaux2016binarized,wang2016cnnpack}. Taking a heavy neural network, such as GoogleNet~\cite{szegedy2016rethinking} or ResNet~\cite{he2016deep}, that has already been well trained with massive data and computing resources as the teacher network, a student network of light architecture can be better learned under teacher's guidance. To inherit the advantages of teacher networks, different methods have been proposed to encourage the consistency between teacher and student network. For example, Ba and Caruana~\cite{NIPS2014_5484} minimized the Euclidean distance between features extracted from these two networks, Hinton \emph{et al.}~\cite{hinton2015distilling} encouraged the student to mimic a softened version of the teacher's output, and FitNet~\cite{romero2014fitnets} introduced intermediate-level hints from teacher's hidden layers to guide the training process of student. Patrick and Nikolaus~\cite{mcclure2016representational} proposed to keep the pairwise distance of examples between student network and teacher. You \emph{et al.}~\cite{you2017learning} utilized multiple teacher networks to guide the training process of student network. Wang \emph{et al.}~\cite{wangAAAI18} introduced a teaching assistant to encourage the similarity between distributions of features maps extracted from teacher and student networks.
  
\begin{figure*}
\setlength{\abovecaptionskip}{0.3cm}
\setlength{\belowcaptionskip}{0.3cm}
\begin{minipage}[t]{1.0\linewidth}
\centering
\includegraphics[height=3.35in]{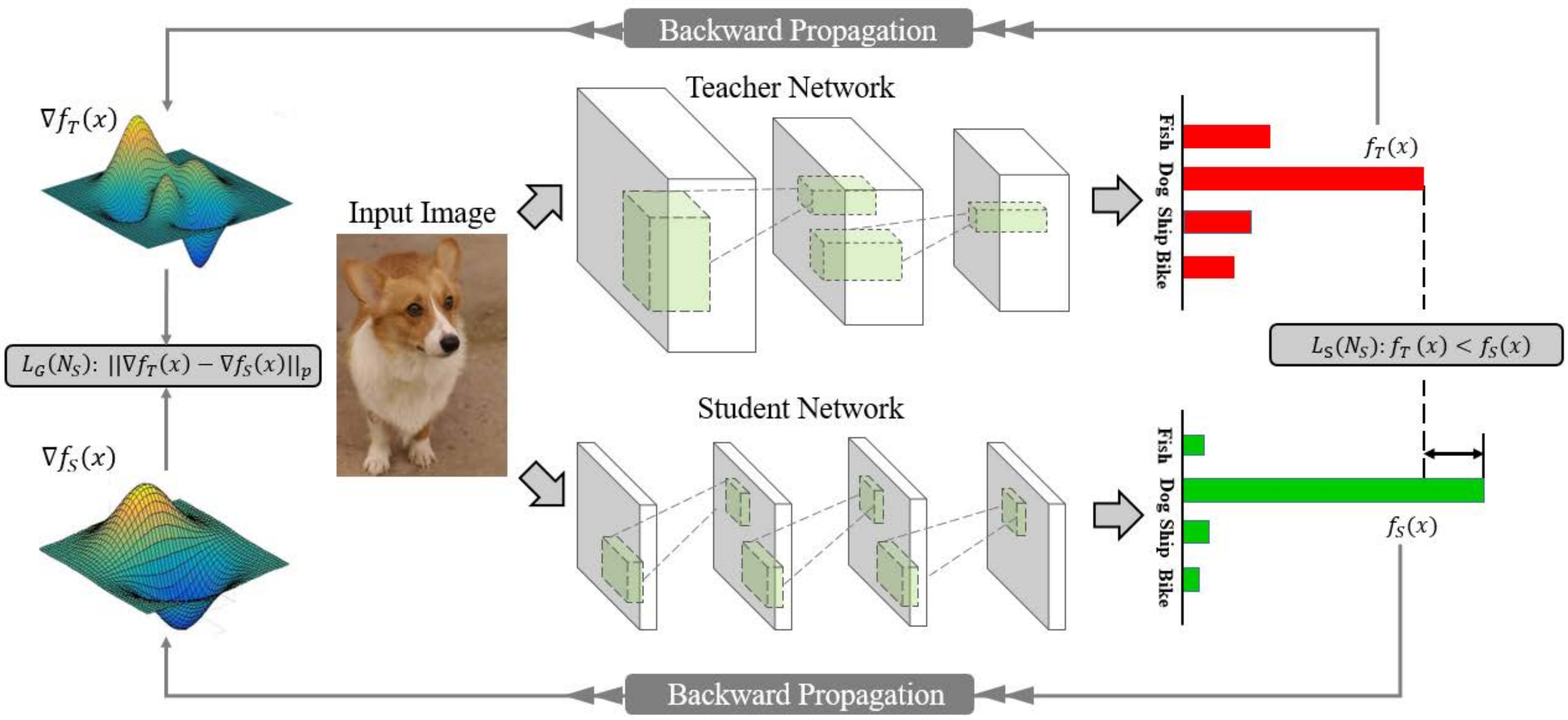}
\caption{
Framework of the proposed algorithm. Constraint $\mathcal L_S(\mathcal N_S)$ imposed on the output part ensures student network to have higher confidence in the prediction than that of teacher network. Constraint $\mathcal L_G(\mathcal N_S)$ imposed on the gradients encourages student network to preserve its confidence in the prediction if there is perturbation on the data. $f(\bx)$ represents the network's prediction for the ground-truth label $y$ of input $x$.}
\label{fig1}
\end{minipage}
\end{figure*}

These aforementioned algorithms have achieved impressive experimental results, however, they were mainly developed in ideal scenarios, where all data are implicitly assumed to be clean. In practice, given examples with perturbation, the training process of the network can be seriously influenced, and the resulting network would not be confident as before to make predictions of examples. Teacher network might make some mistakes, since it is difficult for teacher network to be familiar with all examples fed into the student network. This is consistent with student-teacher learning in the real world. An excellent student is expected to solve practical problems in changeable circumstances, where there might be questions even not known by teachers. 

To solve this problem, in this paper, we introduce a robust teacher-student learning algorithm. The framework of the proposed method is illustrated in \Fref{fig1}. We enable student network to be more confident on its prediction with the help of teacher network. Perturbations on examples might seriously influence the learning of student network. We derive the lower bound of the perturbations that can make student be more vulnerable than teacher through a rigorous theoretical analysis. New objectives in terms of prediction scores and gradients of examples are further developed to maximize the lower bound of the required perturbation. Hence, the overall robustness of the student network to resist perturbations on examples can be improved. Experimental results on benchmark datasets demonstrate the superiority of the proposed method for learning compact and robust deep neural networks.

We organized the rest of the paper as follows. In Section II, we summarize related works on learning convolutional neural networks with fewer parameters by different methods. Section III introduces the previous work we based on. In Section IV, we formally introduce our robust student network learning method in detail, including mathematical proof to the proposed theorem, the calculation method of loss function, and the training strategy. Section V provides results of our algorithm obtained on various benchmark datasets to prove the effectiveness of the proposed method. Section VI concludes this paper.

\section{Related works}
In this section, we briefly introduce related works on learning a efficient convolutional neural networks with fewer parameters. There are two different categories of methods according to their techniques and motivations.

\subsection{Network Trimming}
  
Network trimming aims to remove redundancy in heavy networks to obtain a compact network with fewer parameters and less computational complexity, whereas the accuracy of this portable network is close to that of the original large model.
Gong \emph{et al.}~\cite{gong2014compressing} utilized the benefits of vector quantization to compress neural networks, and a cluster center of weights was introduced as the representation of similar weights. 
Denton \emph{et al.}~\cite{denton2014exploiting} implemented singular value decomposition to the weight matrix of a fully-connected layer to reduce the number of parameters. 
Chen \emph{et al.}~\cite{chen2015compressing} attempted to explore hash encoding to improve the compression ratio. 
Courbariaux \emph{et al.}~\cite{courbariaux2016binarized} and Rastegari \emph{et al.}~\cite{rastegari2016xnor} implemented binary networks. All weights previously storied as 32-bit floating, are converted to binary ($\{-1, 0, 1\}$ or $\{-1, 1\}$).
Moreover, Wang \emph{et al.}~\cite{wang2016cnnpack} and Han \emph{et al.}~\cite{han2015learning} exploited weight pruning to achieve the same goal. In particular, Han \emph{et al.}~\cite{han2015learning} focused on removing subtle weights to reduce the parameters while minimizing the impact of removing them. Over 80\% subtle weights were dropped without the accuracy drop. Furthermore, Han \emph{et al.}~\cite{han2015deep} integrated several neural network compression techniques \emph{i.e.} pruning, quantization, and Huffman coding to further compress the network. Wang \emph{et al.}~\cite{wang2016cnnpack} showed that redundancy exists in not only subtle weights, but also large weights. It converted convolutional kernels into frequency domain to reduce the redundancy contained in larger weights and thereby compress networks with a higher compression ratio. In addition, Wang \emph{et al.}~\cite{wang2017beyond} focused on the redundancy in feature maps instead of network weights, which can also be considered as a modification of network architecture. Although the network trimming method brings a considerable compression and speedup ratio, due to the highly sparse parameters and the irregular network architectures, the actual acceleration effect is often heavily dependent on the customized hardware.

\subsection{Design Small Networks}
Directly designing a new deep neural network of light size is a straightforward approach to realize efficient deep learning. Most of these methods increase the depth of networks with much lower complexity compared to simply stacking convolution layers. For example, ResNet~\cite{he2016deep} introduced a novel residual block that obtained a significant performance with only slightly computation costs. ResNeXt~\cite{xie2017aggregated} explored group convolutions into the building blocks to boost performance. Flattened networks~\cite{jin2014flattened} introduced fully factorized convolutions and designed an extremely factorized network. Almost at the same time, Factorized Networks~\cite{wang2016factorized} introduced topological convolution that treats sections of tensors separately. SqueezeNet~\cite{iandola2016squeezenet} designed a portable network with a bottleneck architecture. SENet~\cite{hu2017squeeze} proposed a novel architecture named SE block, which focuses on the relationship between channels. Moreover,~\cite{vanhoucke2014learning} introduced depth-wise separable convolutions to obtain a great gain in the speed, and the size of networks. With the help from depth-wise separable convolutions, Inception models~\cite{szegedy2015going,normalization2015accelerating} reduced the complexity of the first few layers of network. Latter, Xception network~\cite{chollet2017xception} outperformed Inception model by scaling up depth-wise separable convolutional filters. Subsequently, the MobileNets~\cite{howard2017mobilenets} combined channel-wised decomposition of convolutional filters with depth-wise separable convolutions and achieved state-of-the-art results among portable models. ShuffleNet~\cite{zhang2017shufflenet} introduced a novel from of group convolution and depth-wise separable convolution. Deep fried convnets~\cite{yang2015deep} introduced a novel Adaptive Fastfood transform to reduce the computation of networks. Structured transform networks~\cite{NIPS2015_5869} offered considerable accuracy-compactness-speed tradeoffs based on the new notions rooted in the theory of structured matrices.

\subsection{Teacher-Student Learning}
There is another way to train a portable network. Regard the trained network as a teacher and the deeper yet thinner network as a student. With the help of the intrinsic information captured by the teacher network, the deeper and thinner student network could be well trained. Ba and Caruana~\cite{NIPS2014_5484} suggested that student network mimic the features extracted from the last layer of the teacher networks to assist the training progress of student networks, thereby increasing the depth of the student network. 
Knowledge Distillation (KD)~\cite{hinton2015distilling} pointed out that for two networks with huge structural differences, it is difficult to directly mimic features. Therefore, KD~\cite{hinton2015distilling} proposes to minimize the relaxed output of softmax layers of the two networks. This strategy can further deepen the student network. 
FitNet~\cite{romero2014fitnets}, based on KD, minimized the difference between the features extracted from the middle layers of student and teacher networks. They added several layers of MLPs at the middle layer of the teacher network to match the dimensions of the features of the student network. By establishing a connection between the middle layers of two networks, the student network can be further deepened with fewer parameters. 
McClure and Kriegeskorte~\cite{mcclure2016representational} attempted to minimize the distance between pairs of samples to reduce the difficulty of training students' networks. You \emph{et al.}~\cite{you2017learning} proposed utilizing multiple teacher networks to provide more guidance for the training of student networks. They leverage a voting strategy to balance the multiple guidance from each teacher network. Wang \emph{et al.}~\cite{wangAAAI18} regarded student network as a generator which is a part of GAN~\cite{NIPS2014_5423}, as well as utilized a discriminator as a assistant of teacher for forcing student to generating features which are difficult to distinguish from the features of teacher.

Compared to the network trimming algorithm, the student-teacher learning framework has more flexibility, no special requirements on hardware, and a more structured network structure. Compared to the direct design of a deeper network, guidance from the teacher is beneficial to learning deep networks and improving the performance of student. However, existing student-teacher algorithms pay more attention to improving the performance of student network on pure data sets. The instability caused by the large reduction in parameters makes the performance degradation under the Perturbation settings not yet studied. Therefore, a more robust learning algorithm for improving student network performance under perturbed conditions needs to be developed. This paper proposed a method under the teacher-student learning and knowledge distillation framework, which enhanced the robustness of student network.

\section{Preliminary of Teacher-Student Learning}
To make this paper self-contained, we briefly introduce some preliminary knowledge of teach-student learning here.

The teacher network $\mathcal N_T$ has complicated architecture, and it has already been well trained to achieve a sufficiently high performance. We aim to learn a student network $\mathcal N_S$, which is deeper yet thinner than the teacher network $\mathcal N_T$ but has a lower yet satisfying accuracy. Let $\mathcal X$ be the example space and $\mathcal Y$ be its corresponding $k$-label space. Outputs of these two networks are defined as: 
\begin{equation}
\begin{aligned}
\bo_T = \softmax(\ba_T),\ \ \ \ \ \ \bo_S = \softmax(\ba_S), \\
\end{aligned}
\label{eq:softmax}
\end{equation}
where $\ba_T$ and $\ba_S$ are the features produced by pre-softmax layers of teacher and student networks, respectively.

The teacher network $\mathcal N_T$ is usually trained on a relatively large dataset and consists of a large number of parameters, so that the teacher network usually achieves a high accuracy in classification task. Given significantly fewer parameters and numbers of multiplication operations, if adopting the same training strategies as the teacher network, the student network $\mathcal N_S$ is difficult to achieve a high performance. It is therefore necessary to, improve student network performance by investigating the assistance of the teacher network. A straightforward method is to encourage the features of an image extracted from these two networks to be similar~\cite{NIPS2014_5484}. The objection function can be written as 
\begin{equation}
\mathcal L(\mathcal N_S) = \frac{1}{n}\sum_{i=1}^n[\mathcal H(\bo_S^i,y^i) + \frac{\lambda}{2} \| \bo_S^i-\bo_T^i\|^2],
\label{eq:2}
\end{equation}
where the second term helps the student network to extract knowledge from the teacher, $\mathcal{H}$ refers to the cross-entropy loss, ,$\bo_S^i$ indicates the output of the $i$-th example in $\mathcal{X}$ by the student network, $y^i$ refers to the corresponding label, and $\lambda$ is the coefficient to balance two terms in the function.
The teacher and student networks can be significantly different in architecture, and thus it is difficult to expect features extracted by these two networks for the same example to be same. Hence, Knowledge Distillation (KD)~\cite{hinton2015distilling}, as an effective alternative, was proposed to distill knowledge from classification results to minimize
\begin{equation}
\mathcal L_{KD}(\mathcal N_S) = \frac{1}{n}\sum_{i=1}^n[\mathcal H(\bo_S^i,y^i) + \lambda \mathcal H(\bm \tau(\bo_S^i),\bm \tau(\bo_T^i))],
\label{eq:kdLoss}
\end{equation}
where the second term $\mathcal H(\bm \tau(\bo_S^i),\bm \tau(\bo_T^i))$ aims to enforce the student network to learn from softened output of the teacher network. $\bm\tau(\cdot)$ is a relaxation function defined as follow:
\begin{equation}
\begin{aligned}
\bm \tau(\bo_T) = \softmax(\frac{\ba_T}{\tau}), \\
\bm \tau(\bo_S) = \softmax(\frac{\ba_S}{\tau}).
\end{aligned}
\label{eq:tau}
\end{equation}
$\bm\tau$ is introduced to make sure that the second term in \Eref{eq:kdLoss} can play a different role compared with the first one. This is because that $\bo_T$ might be extremely similar to the one hot code representation of the ground-truth labels, while a soften version of output is different from the true labels. Moreover, the soften version of output could also provide more information to guide the learning of student, as the cross-entropy loss and soften version output will enhance the influence of classes other than the true label one.

Although KD loss in \Eref{eq:kdLoss} allows the student network to access the knowledge from the teacher network, the significant reduction in the number of parameters decreases the capability of the student network and makes it more vulnerable to input disturbances. The learned student network might achieve a reasonable performance on clean data, but it would suffer from a serious performance decline when encountering perturbation on the data in real world applications. To solve this issue, it is therefore necessary to enforce the robustness of the student network when applied to practical scenario.

\section{Robust Student Network Learning} 
We take a multi-class classification problem over $k$ classes as an example to introduce our robust Student Network Learning. Given a teacher network $\mathcal N_T$ and a student network $\mathcal N_S$, an example $\bx$ can then be classified by two networks $\bo_T(\bx) = \mathcal N_T(\bx)$ and $\bo_S(\bx) = \mathcal N_S(\bx)$, respectively. Denote $o_T^j(\bx)$ and  $o_S^j(\bx)$ as the $j$-th value of the $k$-dimensional vectors $\bo_T(\bx)$ and $\bo_S(\bx)$, respectively. Then we define $f_T(\bx) = o_T^y(\bx)$ and $f_S(\bx) = o_S^y(\bx)$ as the scores produced by two networks for the ground-truth label $y$ of the example $x$, respectively. If a classifier has more confidence in its prediction, the predicted score will be higher. With the help of the teacher network, the student network is supposed to be more confident in its prediction, so that
\begin{equation}
f_S(\bx)>f_T(\bx).
\label{eq:constrain1}
\end{equation}

\subsection{Theoretical Analysi}
The above relationship holds in ideally noise-free scenario. In practical scenario, perturbations on examples are unavoidable, and the student network is expected to resist the unexpected influence and bring in the robust prediction,
\begin{equation}
f_S(\bx+\bdelta)>f_T(\bx+\bdelta), \ \ \ \ \|\bdelta\|_2 \leq R \ \ \text{and}\ \ x+\bdelta \in C,
\label{eq:what_is_robust}
\end{equation}
where $\bdelta$ is a perturbation added to $x$. We restrict this perturbation in a spherical space of radius $R$, and $C$ is a constraint set that specifies some requirements for the input, \emph{e.g.,} an image input should be in $[0,1]^d$, where $d$ is the dimension. We define the ball as $B_p(\bx,R) = \{\bz\in\mathbb R^d\ |\ \|x-\bz\|_p \leq R\}$.

We aim to discover a student network that stands on the shoulder of the teachers to make a confident prediction not only for clean examples but also for examples with perturbations. The perturbation $\bdelta$ exists on examples without influencing their corresponding ground-truth labels. However, with the increase of perturbation intensity, the learning process of the student network would be seriously disturbed. Taking \Eref{eq:what_is_robust} as an auxiliary constraint in training the student network can be helpful for improving robustness of the network. But it is difficult and impossible to enumerate and try every possible $\bdelta$ to form the constraint. To make the optimization problem tractable, we seek for some alternatives and proceed to study the maximum perturbation that can be defended by the system. \Fref{fig1} shows the framework of our approach.

\begin{thm}
Let $x\in \mathbb R^d$ be an example in $\mathcal X$. $f_S(\bx)$ and $f_T(\bx)$ are functions adapted from the student and the teacher networks to predict the label $y$ of example $x$, respectively. Given $f_S(\bx) > f_T(\bx)$, for any $\bdelta\in \mathbb R^d$ with
\begin{equation}
\|\bdelta\|_q\geq 
\frac{f_S(\bx) - f_T(\bx)}{\max_{\bz\in B_p(\bx,R)}\|\nabla f_T(\bz)-\nabla f_S(\bz)\|_p},
\label{eq:theorem1}
\end{equation}
we have $f_T(\bx+\bdelta)>f_S(\bx+\bdelta)$.
\end{thm}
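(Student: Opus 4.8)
The plan is to work with the scalar gap function $g(\bz) = f_S(\bz) - f_T(\bz)$, so that the hypothesis $f_S(\bx) > f_T(\bx)$ becomes $g(\bx) > 0$ and the desired conclusion $f_T(\bx+\bdelta) > f_S(\bx+\bdelta)$ is exactly $g(\bx+\bdelta) < 0$. Assuming $f_S$ and $f_T$ are continuously differentiable (so $g$ is $C^1$) and that the whole segment $\{\bx + t\bdelta : t\in[0,1]\}$ lies inside the ball $B_p(\bx,R)$, the first step is to express the change of $g$ along this segment by the fundamental theorem of calculus,
\[
g(\bx+\bdelta) = g(\bx) + \int_0^1 \langle \nabla g(\bx + t\bdelta),\, \bdelta\rangle \, dt.
\]
This reduces the statement to showing that the integral term is at most $-g(\bx)$, so that the positive margin is exactly cancelled and the gap is driven below zero.

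Second, I would bound the inner product pointwise by H\"older's inequality with the conjugate exponents $p$ and $q$, giving $|\langle \nabla g(\bz),\, \bdelta\rangle| \le \|\nabla g(\bz)\|_p\,\|\bdelta\|_q$ at each $\bz = \bx + t\bdelta$ on the segment. Because every such $\bz$ lies in $B_p(\bx,R)$, I can replace $\|\nabla g(\bz)\|_p$ by the uniform bound $M := \max_{\bz \in B_p(\bx,R)} \|\nabla f_T(\bz) - \nabla f_S(\bz)\|_p$ (note $\|\nabla g\|_p = \|\nabla f_T - \nabla f_S\|_p$), which then pulls out of the integral. Substituting the threshold hypothesis $\|\bdelta\|_q \ge g(\bx)/M$ into the product yields $M\,\|\bdelta\|_q \ge g(\bx)$, i.e. the integral is \emph{permitted} to be as large in magnitude as the margin $g(\bx)$.

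The \textbf{main obstacle} is that H\"older only controls the magnitude of $\langle \nabla g, \bdelta\rangle$, whereas the conclusion needs this term to be genuinely negative, $\int_0^1 \langle \nabla g(\bx+t\bdelta),\, \bdelta\rangle\,dt \le -g(\bx)$; for a generic $\bdelta$ the inner product could be positive (a perturbation that \emph{increases} the student's advantage), so controlling the sign is the crux. I would resolve this by taking $\bdelta$ in the worst-case, steepest-descent direction of $g$ — the direction realizing equality in H\"older with a negative sign, so that $\langle \nabla g(\bz),\, \bdelta\rangle = -\|\nabla g(\bz)\|_p\,\|\bdelta\|_q$ along the path — under which the integral is at most $-M\,\|\bdelta\|_q \le -g(\bx)$. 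This gives $g(\bx+\bdelta) \le g(\bx) - g(\bx) = 0$, i.e. $f_T(\bx+\bdelta) \ge f_S(\bx+\bdelta)$, with the strict inequality of the statement recovered when the threshold bound holds strictly. I expect the delicate points to be (i) justifying the interchange of the maximum with the integral through the uniform gradient bound on the ball, so that $M$ acts as a legitimate Lipschitz constant for $g$ along the segment, and (ii) making precise the implicit restriction that $\bdelta$ is the adversarial direction and that $\bx + t\bdelta$ never leaves $B_p(\bx,R)$, since without these the plain bound only guarantees that \emph{small} perturbations cannot flip the order rather than that a threshold-sized perturbation necessarily does.
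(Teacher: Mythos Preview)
Your mechanics --- the fundamental theorem of calculus along the segment, H\"older's inequality, and the replacement of the pointwise gradient norm by the uniform bound $M$ over $B_p(\bx,R)$ --- are exactly the steps the paper uses. You are also correct that the theorem \emph{as literally stated} cannot be proved: a perturbation of the required magnitude pointing along $+\nabla g$ (rather than $-\nabla g$) will increase the student's margin, so no argument can establish ``$\|\bdelta\|_q \ge \text{threshold} \Rightarrow f_T(\bx+\bdelta)>f_S(\bx+\bdelta)$'' for \emph{every} $\bdelta$.

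The paper's own proof does not establish that implication either. It begins by \emph{assuming} $f_T(\bx+\bdelta)>f_S(\bx+\bdelta)$ and then deduces $\|\bdelta\|_q \ge \text{threshold}$; in other words, it proves the converse --- a necessary condition on the size of any order-flipping perturbation --- and this is precisely the ``lower bound of the perturbation'' that the rest of the paper then tries to maximize. So the ``main obstacle'' you flagged is a genuine misstatement in the theorem rather than a gap in your reasoning. Your proposed fix (restrict $\bdelta$ to the H\"older-tight adversarial direction) is one legitimate way to manufacture a sufficient-condition version, but the simpler resolution is to read the theorem the way the paper's proof actually argues: if $\bdelta$ destroys the student's advantage, then $\|\bdelta\|_q$ must be at least the stated ratio.
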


\begin{proof}
By the main theorem of calculus, we have
\begin{equation}
  f_S(\bx+\bdelta) = f_S(\bx)+\int_0^1\langle\nabla f_S(\bx+t\bdelta),\bdelta\rangle{\mathrm d}t
  \label{eq:stu_calculus}
\end{equation}
and
\begin{equation}
  f_T(\bx+\bdelta) = f_T(\bx)+\int_0^1\langle\nabla f_T(\bx+t\bdelta),\bdelta\rangle{\mathrm d}t.
  \label{eq:tea_calculus}
\end{equation}
If the perturbation $\bdelta$ is so significant that $f_T(\bx+\bdelta)>f_S(\bx+\bdelta)$, we get
\begin{equation}
  \begin{split}
  &  0<f_S(\bx) -  f_T(\bx) < \\ 
  &  \int_0^1\langle\nabla f_T(\bx+t\bdelta)-\nabla f_S(\bx+t\bdelta),\bdelta\rangle{\mathrm d}t.
  \end{split}
  \label{eq:stu-tea}
\end{equation}
Consider the fact that
\begin{equation}
\begin{split}
  &  \int_0^1 \langle\nabla f_T(\bx+t\bdelta - \nabla f_S(\bx+t\bdelta),\bdelta\rangle{\mathrm d}t\leq \\ 
  &  \|\bdelta\|_q\int_0^1\|\nabla f_T(\bx+t\bdelta) - \nabla f_S(\bx+t\bdelta)\|_p{\mathrm d}t,
\end{split}
\label{eq:holder}
\end{equation}
where holder inequality is applied and q-norm is dual to the p-norm with $\frac{1}{p}+\frac{1}{q}=1$. By combining \Eref{eq:stu-tea} and \Eref{eq:holder}, we have
\begin{equation}
\begin{aligned}
  \|\bdelta\|_q \geq \frac{f_S(\bx) - f_T(\bx)} {\int_0^1\|\nabla f_T(\bx+t\bdelta) - \nabla f_S(\bx+t\bdelta)\|_p{\mathrm d}t},
\end{aligned}
\label{eq:got_delta}
\end{equation}
where the denominator can be further upper bounded using the following inequality
\begin{equation}
\begin{aligned}
   \int_0^1\|\nabla f_T(\bx+t\bdelta - \nabla f_S(\bx+t\bdelta)\|_p{\mathrm d}t \leq \\  
   \max_{\bz\in B_p(\bx,R)}\|\nabla f_T(\bz)-\nabla f_S(\bz)\|_p.
\end{aligned}
\label{eq:upper_denominator}
\end{equation}
The lower bound for the q-norm of $\bdelta$ to break the robust prediction of the student network (\emph{i.e.}, \Eref{eq:what_is_robust}) is therefore
\begin{equation}
   \|\bdelta\|_q\geq \frac{f_S(\bx) - f_T(\bx)}{\max_{\bz\in B_p(\bx,R)}\|\nabla f_T(\bz)-\nabla f_S(\bz)\|_p},
   \label{eq:delta_up_de}
\end{equation}
which completes the proof.
\end{proof}

  \renewcommand{\algorithmicrequire}{\textbf{Input:}}
  \renewcommand{\algorithmicensure}{\textbf{Output:}}
  \begin{algorithm}
  \caption{Robust Student Network Learning}
  \label{alg:classifier}
  \begin{algorithmic}[1]
  \REQUIRE A given neural network $\mathcal N_T$; training dataset $\mathcal X$ with n instances; the corresponding $\bm k$-label set $\mathcal Y$; parameters: $\lambda$, $\beta$, and $\tau$.
  
  \STATE
  Initialize a neural network $\mathcal N_S$, where the number of parameters in $\mathcal N_S$ is significantly fewer than that in $\mathcal N_T$; 
  \STATE
  \textbf{repeat}
  \STATE
  \ \ \ \ Select an instance $\bm x$ and its label $\bm y$ randomly;
  \STATE
  \ \ \ \ Employ the teacher network: $o_T \leftarrow \mathcal N_T(\bm x)$,\\ 
  \STATE
  \ \ \ \ Employ the student network: $o_S \leftarrow \mathcal N_S(\bm x)$;
  \STATE
  \ \ \ \ Calculate the loss function $\mathcal L(\mathcal N_S)$ using \Eref{eq:Gloss};
  \STATE
  \ \ \ \ Update weights in the student network $\mathcal N_S$;
  \STATE
  \textbf{until} reach the limitation of training epoch
  \ENSURE{A robust student network $\mathcal N_S$.}
  \end{algorithmic}
  \end{algorithm}

According to Theorem 1, maximizing the value of $f_S(\bx) - f_T(\bx)$  while minimizing the value of $\max_{\bz\in B_p(\bx,R)}\|\nabla f_T(\bz)-\nabla f_S(\bz)\|_p$, the lower bound over $\bdelta$ will be enlarged, so that the student network is able to tolerate more severe perturbation and become more robust to make confident prediction. Without loss of generality, we take $p=q=2$ in the following discussion.

\subsection{Method}
Based on the analysis above, two new objectives are introduced into teacher-student learning paradigm to achieve a robust student network. To encourage $f_S(\bx)>f_T(\bx)$, we plan to minimize the loss function
\begin{equation}
  \mathcal L_S(\mathcal N_{S}) = \frac{1}{n}\sum_{i=1}^{n} \max(0,\gamma+f_T(\bx^i)-f_S(\bx^i)),
\label{eq:Sloss}
\end{equation}
where $\gamma >0$ is a constant margin. $f_S(\bx)$ is supposed to be greater than $f_T(\bx)+\gamma$, otherwise, there will be a penalty for the student network. 
It is difficult to explicitly calculate the value of $\max_{\bz\in B_2(\bx,R)}\|\nabla f_T(\bz)-\nabla f_S(\bz)\|_2$, due to the existence of the max operation. But by appropriately setting the radius $R$ and considering the sufficiently large training set, the data point in the ball $B_2(\bx,R)$ to reach the maximum value of $\|\nabla f_T - \nabla f_S\|_2$ would often have some closed examples in the training set. Hence, to minimize the value of $\max_{\bz\in B_2(\bx,R)}\|\nabla f_T(\bz)-\nabla f_S(\bz)\|_2$, we proposed to minimize the difference between gradients of student and the teacher networks w.r.t. the training examples as

\begin{equation}
  \mathcal L_G(\mathcal N_{S}) = \frac{1}{n}\sum_{i=1}^{n} \|\nabla \bm\tau(f_S(\bx^i))-\nabla \bm\tau(f_T(\bx^i))\|^2,
\label{eq:Gloss}
\end{equation}
where $\bm\tau$ is the relaxation function explained in \Eref{eq:kdLoss}. In addition, we take the KD loss~\cite{hinton2015distilling} into consideration, the resulting objective function of our robust student network learning algorithm can be written as:
\begin{equation}
  \mathcal L(\mathcal N_{S}) = \mathcal L_{KD}(\mathcal N_{S}) + C_1 \mathcal L_{G}(\mathcal N_{S}) + C_2 \mathcal L_S(\mathcal N_{S}),
\label{eq:final_loss}
\end{equation}
where $C_1$ and $C_2$ are the balanced coefficients of $\mathcal L_{G}(\mathcal N_{S})$ and $\mathcal L_{S}(\mathcal N_{S})$, respectively.

The process of training the student networks can be found in Algorithm 1. After the initialization of the student network, we train the student network according to the proposed algorithm. Next, we explain in detail the calculation of loss. For convenience, we set the batch size as 1, that is, we first select a sample $\{x,y\}$ from the dataset $\mathcal X$ and $\mathcal Y$ as input for forward propagation of the teacher network and the student network. Then we calculate outputs of the two networks $o_T(\bx)$ and $o_S(\bx)$. Combining outputs $o_T(\bx)$ and $o_S(\bx)$ with the corresponding label $y$, we can calculate the first term in \Eref{eq:final_loss} $L_{KD}$ according to \Erefs{eq:kdLoss} and (\ref{eq:tau}). $o_T$ and $o_S$ are both $k$-dimensional vectors, which are the network's prediction scores for $k$ categories. With the help of label $y$, we can get the predicted scores for label, $f_T$ and $f_S$ and calculate the second term in \Eref{eq:final_loss} $L_s$ according to \Eref{eq:Sloss}. In order to get the value of $L_g$, we first calculate the derivative of $f_T$ and $f_S$ with respect to the input sample $x$. Same as back propagation algorithm, we can apply the chain rule to get these results. In experiments, we utilize the automatic derivation tool which is integrated in mainstream deep learning platforms to achieve this process. After getting $f_T$ and $f_S$, the loss can be easily calculated using \Eref{eq:final_loss}. Finally, the weights in the student network are updated by the gradients obtained by the back-propagation algorithm.

The delta $\bdelta$ in \Eref{eq:theorem1} is the noise in an image $x$. The noise can come from various sources. Some are physical, linked to the nature of light and to optical artifacts, and some others are created during the conversion from electrical signal to digital data. As noise degrades the quality of an image, the performance of neural networks in image classification task could be seriously influenced. The proposed robust student network aims to handle unexpected noises in images and to preserve consistent decisions with or without noises (see \Erefs{eq:constrain1} and (\ref{eq:what_is_robust}). In the literature, the overall noise produced by a digital sensor is usually considered as a stationary white additive Gaussian noise~\cite{julliand2015image}. We report the robustness of the learned networks against Gaussian noise in experiment. In addition, we also evaluate the performance of the learned networks against combinations of different types of noise on training and test sets, since it is difficult to know what types of noise could be before the test stage.

\begin{figure*}
\setlength{\abovecaptionskip}{0.2cm}
\setlength{\belowcaptionskip}{0.2cm}
\centering
\begin{tabular}{ccc}
\centering
\includegraphics[width=0.305\linewidth]{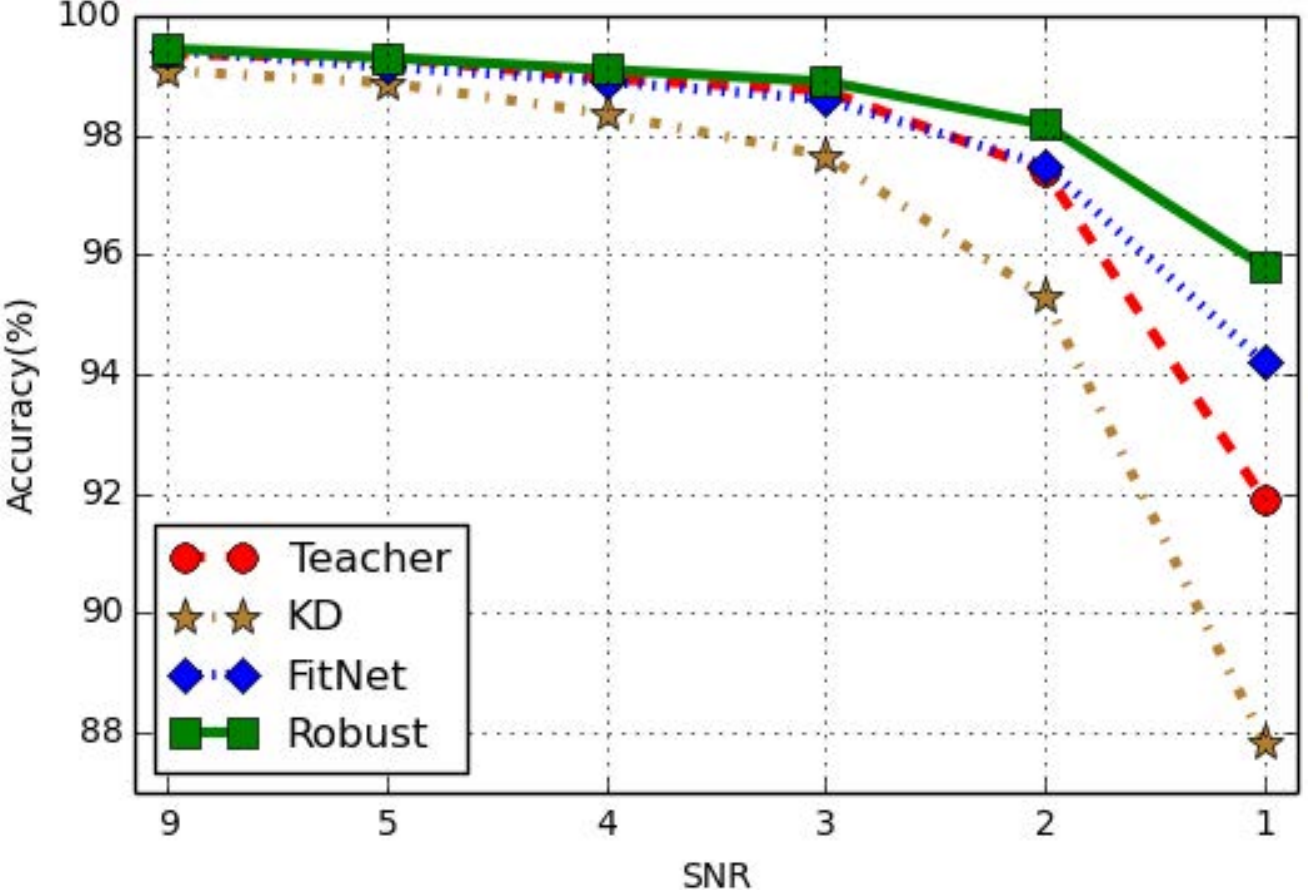}\ \ \ \ & \includegraphics[width=0.30 \linewidth]{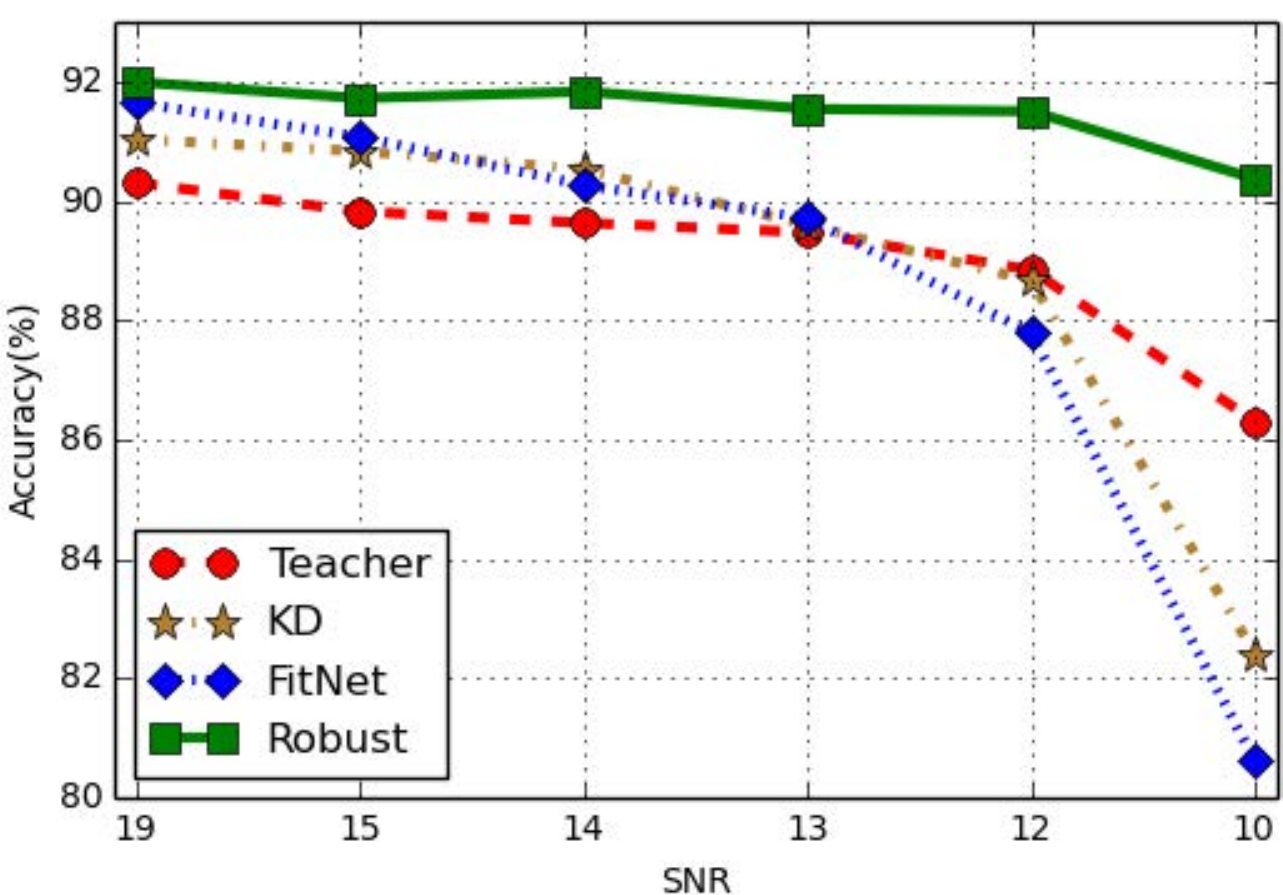}\ \ \ \  & \includegraphics[width=0.30 \linewidth]{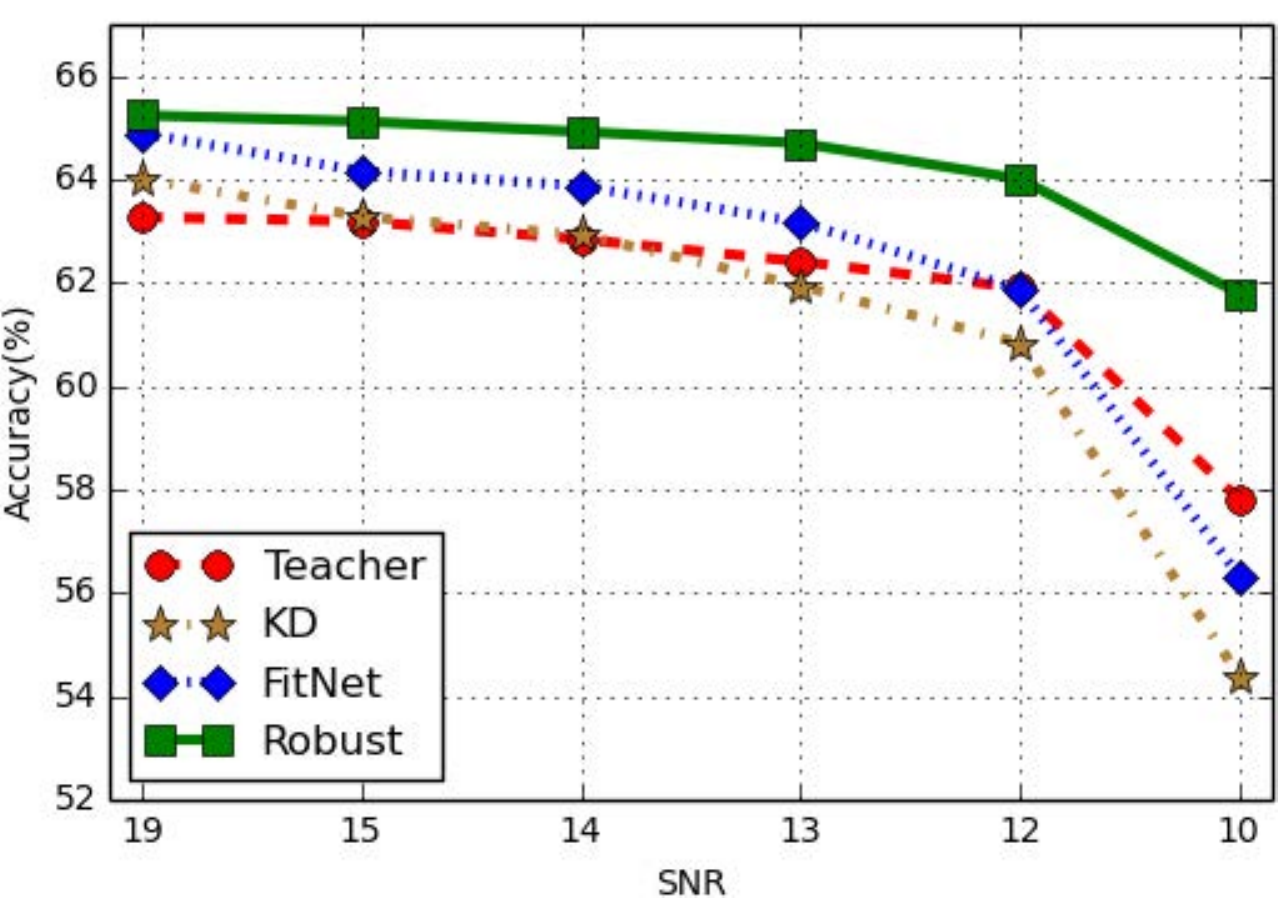} \\
(a) MNIST & (b) CIFAR-10 & (c) CIFAR-100 \\
\end{tabular}
\caption{Accuracies obtained by different networks trained on three datasets and under various values of SNR.}
\label{fig2}
\end{figure*}

\section{Experiments}
In this section, we experimentally investigate the effectiveness of the proposed robust student network learning algorithm. The learned student network is compared with the original teacher network, and the student networks learned through KD~\cite{hinton2015distilling} and Fitnet~\cite{romero2014fitnets}. The experiments are on three benchmark datasets: MNIST~\cite{lecun1998gradient}, CIFAR-10~\cite{krizhevsky2009learning}, and CIFAR-100~\cite{krizhevsky2009learning}.

\subsection{Dataset and Settings}
MNIST~\cite{lecun1998gradient} is a handwritten digit dataset (from 0 to 9) composed of $28\times 28$ greyscale images from ten categories. The whole dataset of 70,000 images is split into 60,000 and 10,000 images for training and test, respectively. Following the setting in~\cite{romero2014fitnets}, we trained a teacher network of maxout convolutional layers reported in~\cite{goodfellow2013maxout}, which contains 3 convolutional maxout layers and a fully-connected layer with 48-48-24-10 units, respectively. After that, we design the student network which contains 6 convolutional maxout layers and a fully-connected layer, which is twice as deep as the teacher network but with roughly 8\% of the parameters. As reported in \Tref{tab_arch_4}, the architectures of the teacher and student network were shown in detail in the first two columns.

CIFAR-10~\cite{krizhevsky2009learning} is a dataset that consists of $32\times 32$ RGB color images draw from 10 categories. There are 60,000 images in CIFAR-10 dataset which are split into 50,000 training and 10,000 testing images. According to~\cite{goodfellow2013maxout} and~\cite{romero2014fitnets}, we preprocessed the data using global contrast normalization (GCA) and ZCA whitening, and augmented the training data via random flipping. We followed the architecture used in Maxout~\cite{goodfellow2013maxout} and FitNet~\cite{romero2014fitnets} to train a teacher network with three maxout convolutional layers of 96-192-192 units. For fair comparison, we designed a student network with a structure similar to FitNet which has 17 maxout convolutional layers followed by a maxout fully-connected layer and a top softmax layer, and we also investigate KD method with the same architecture. The detailed architecture of teacher was shown in the `Teacher(CIFAR-10)' column of \Tref{tab_arch_4}, and that of student was shown as `Student 4' column.

CIFAR-100 dataset~\cite{krizhevsky2009learning} has images of the same size and format as those in CIFAR-10, except that it has 100 categories with only one tenth as labeled images per category. More categories and fewer labeled examples per category indicates that classification task on CIFAR-100 is more challenging than that on CIFAR-10. We preprocess images in CIFAR-100 using the same methods for CIFAR-10 and the teacher network and the student network share the same hyper-parameters with those on the CIFAR-10 dataset. Besides, the architecture of teacher is also same as that used for CIFAR-10, except that the number of units in the last softmax layer was changed to 100 to adapt to the number of categories. 

The hyper-parameters are tuned by minimizing the error on a validation set consisting of the last 10,000 training examples on each dataset. Following the setting in FitNet~\cite{romero2014fitnets}, we set batch size as 128, max training epoch as 500, learning rate as 0.17 for linear layers and 0.0085 for convolutional layers, and momentum as 0.35. According to the hint layer proposed in FitNet~\cite{romero2014fitnets}, we pre-trained a classifier using the features of the teacher network's middle layer, and then we apply the classifier with the student network features.

\begin{figure*}
\setlength{\abovecaptionskip}{0.2cm}
\setlength{\belowcaptionskip}{0.2cm}
\centering

\begin{tabular}{cccc}
\centering
\small
\includegraphics[width=0.18\linewidth]{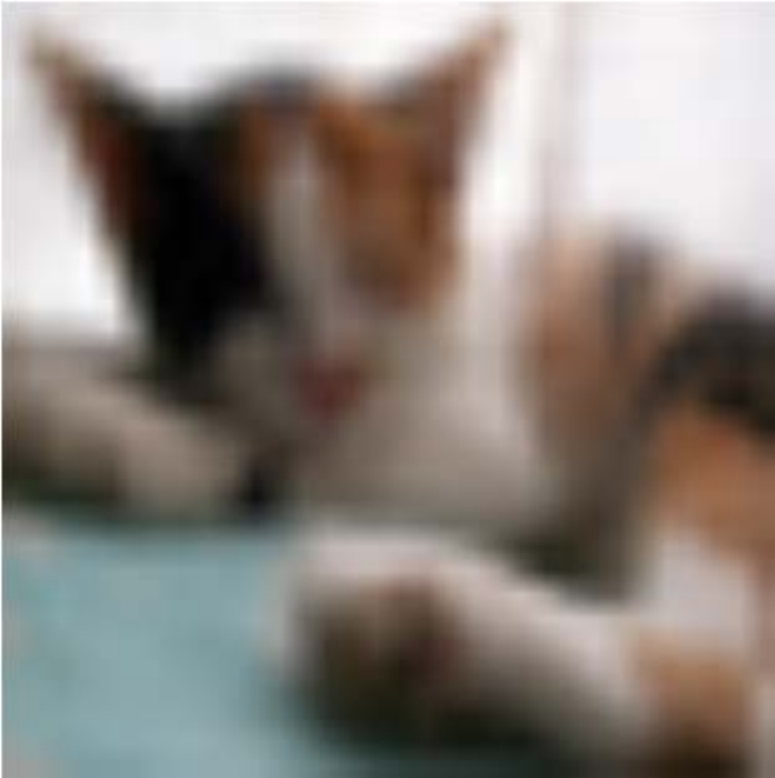} & \includegraphics[width=0.18\linewidth]{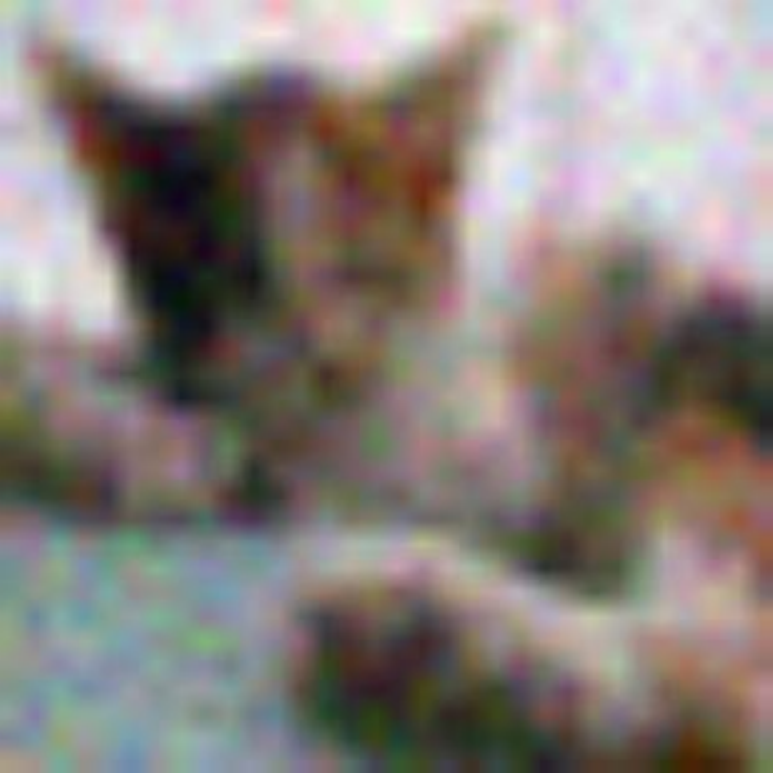} &\includegraphics[width=0.18\linewidth]{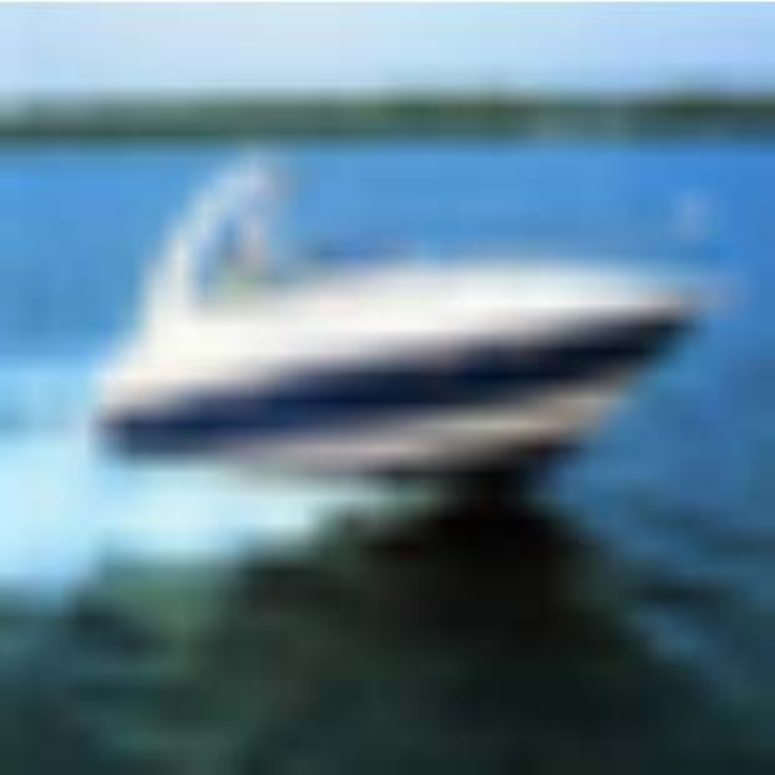} & \includegraphics[width=0.18\linewidth]{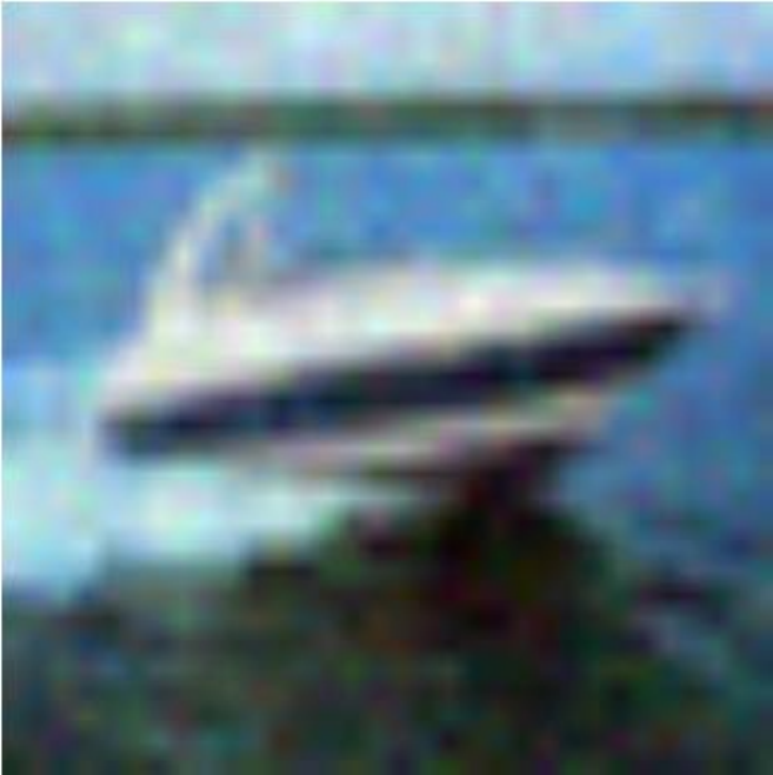} \\
\includegraphics[width=0.23\linewidth]{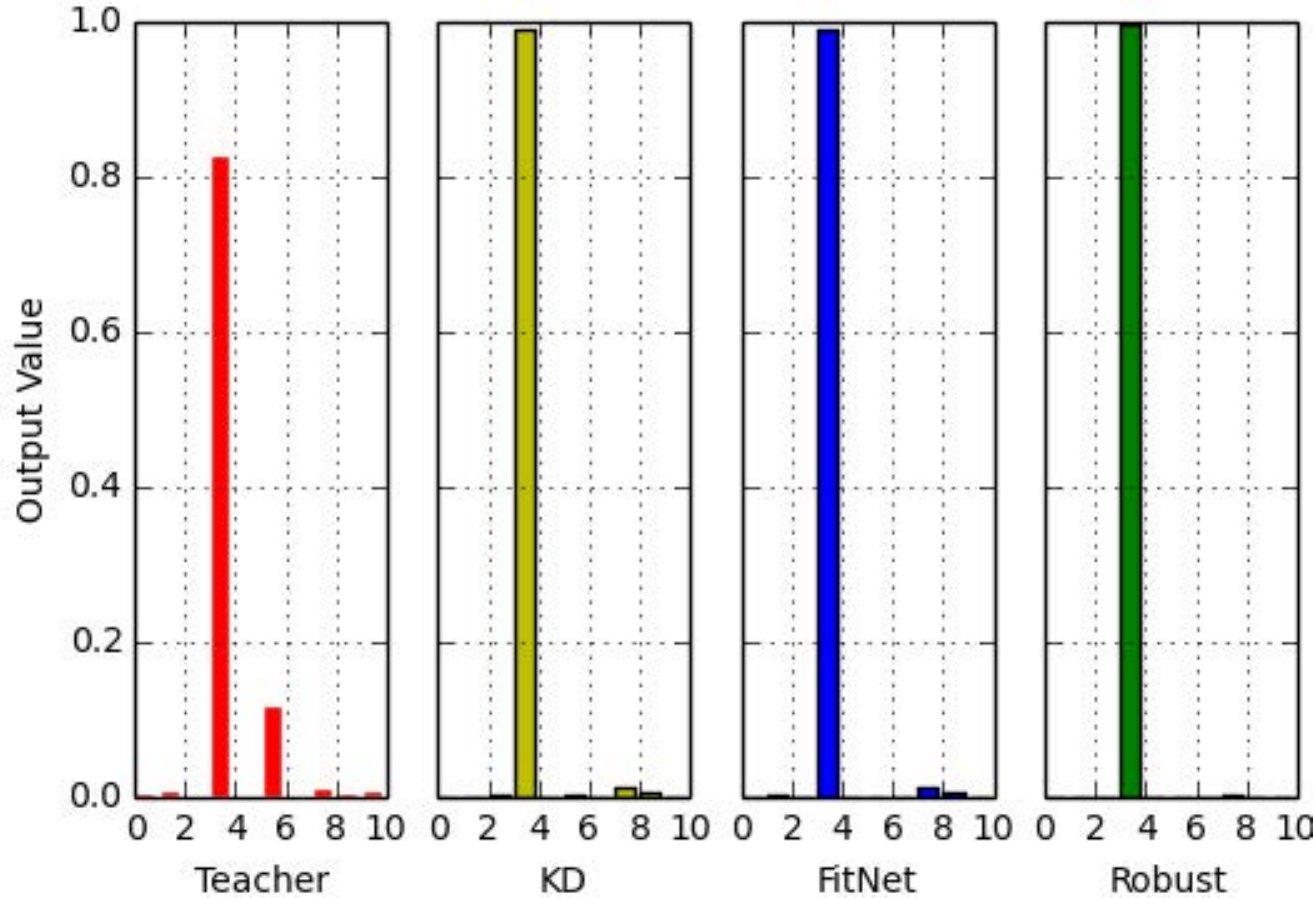} & \includegraphics[width=0.23\linewidth]{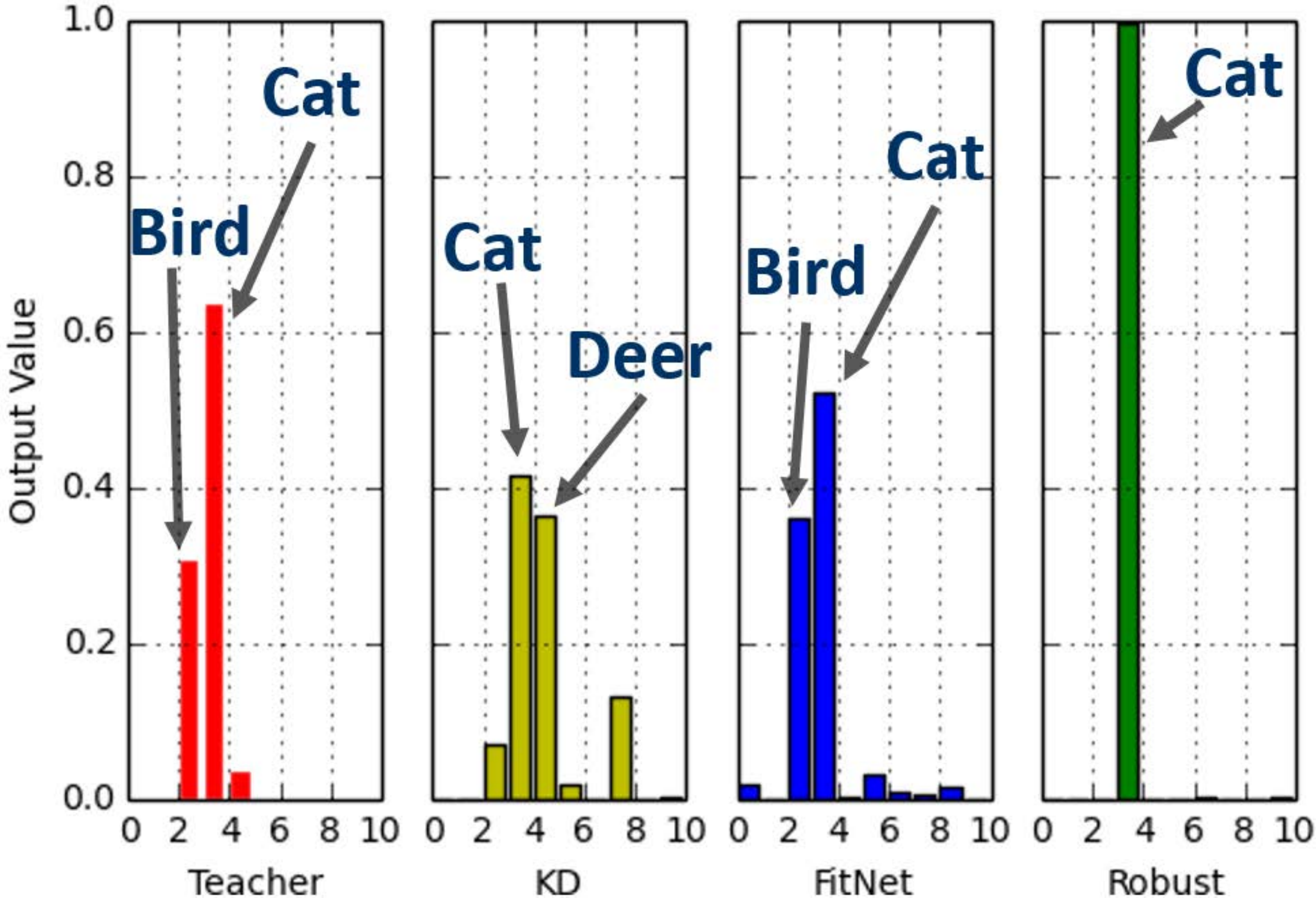} &\includegraphics[width=0.23\linewidth]{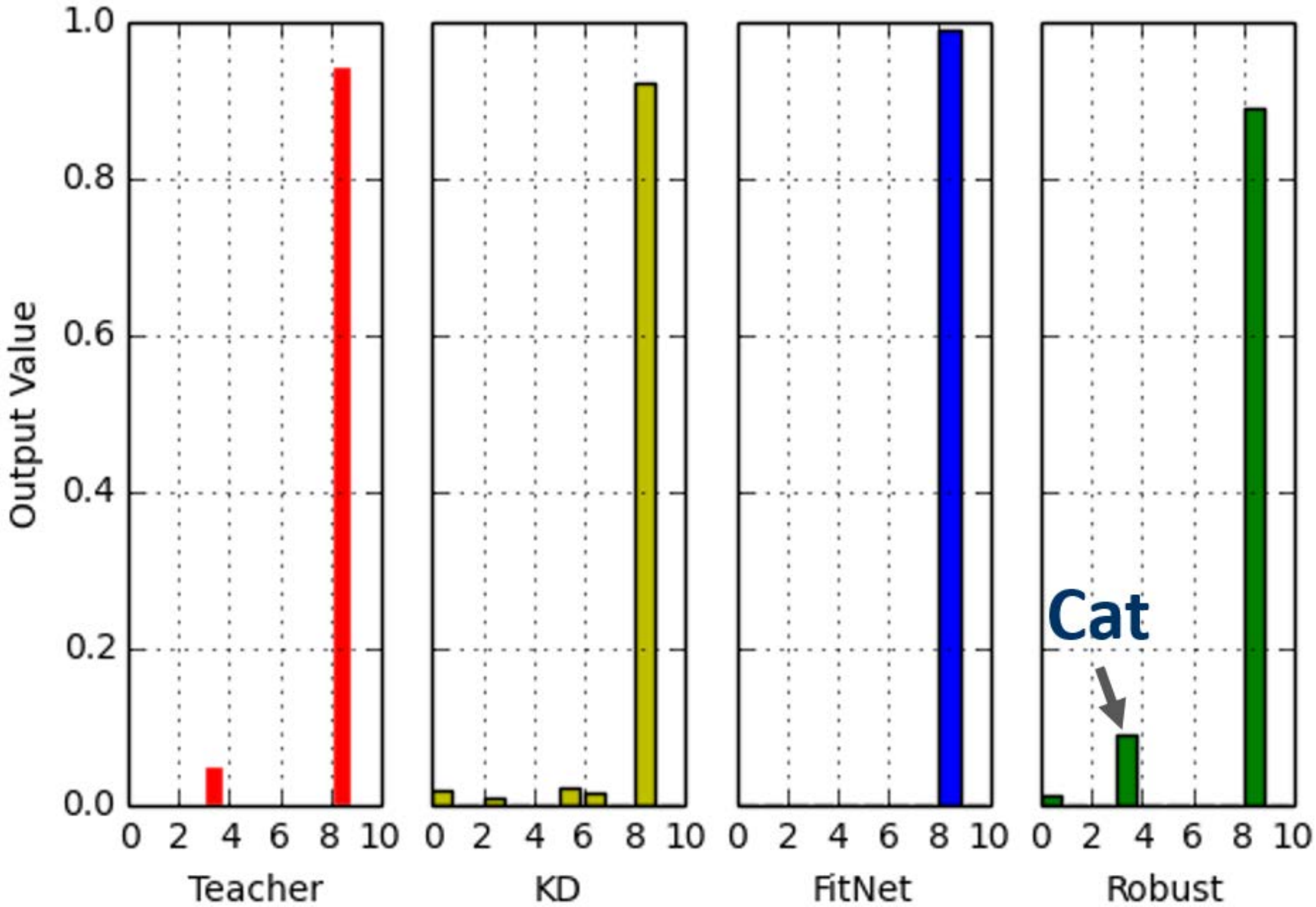} & \includegraphics[width=0.23\linewidth]{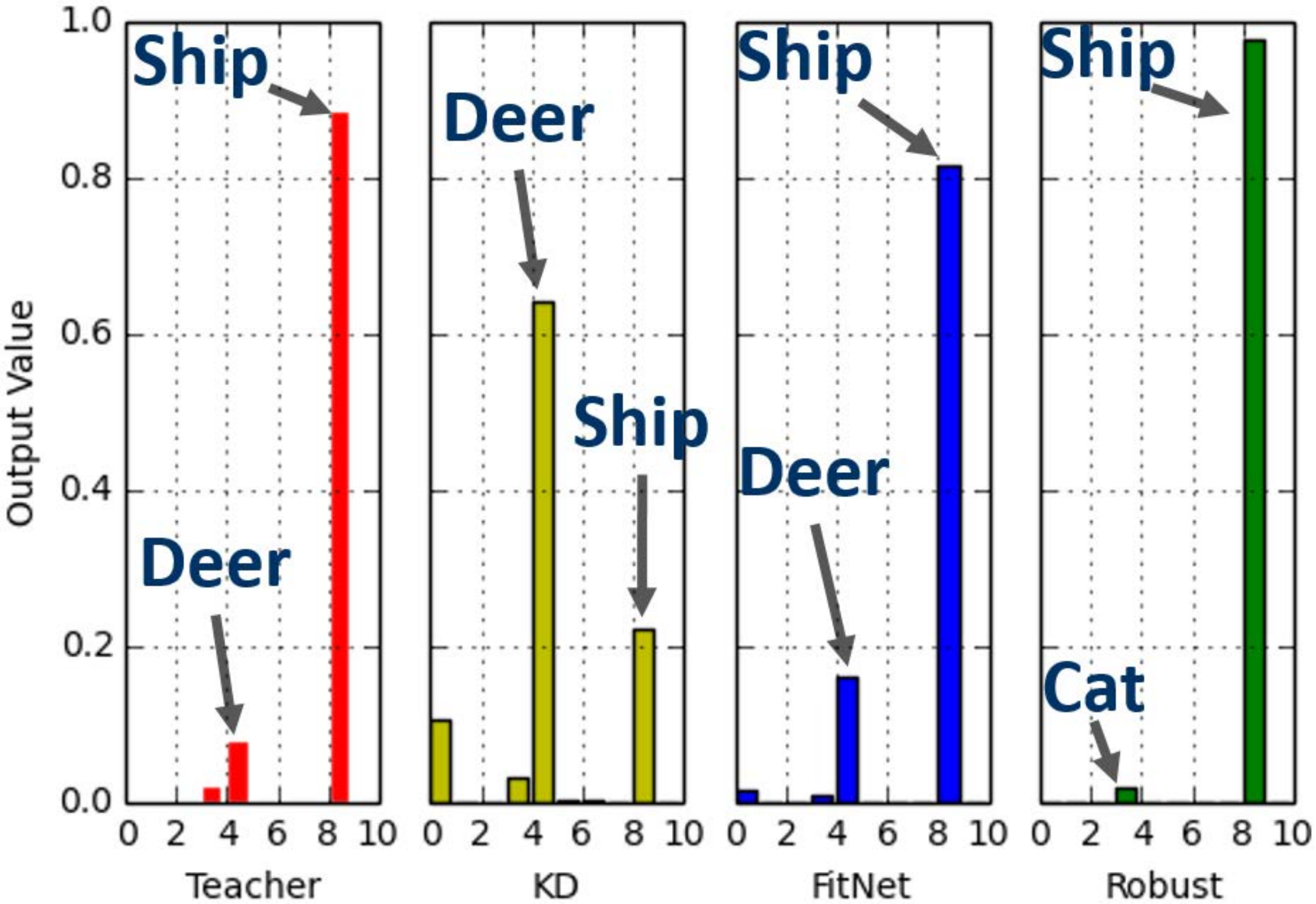} \\
(a) Clean Cat & (b) Cat with SNR=10 & (c) Clean Ship & (d) Ship with SNR=10 \\
\end{tabular}
\caption{Example images (the top line) and their corresponding prediction scores by different networks (the bottom line). (a) and (c) are pure images, while (b) and (d) are disturbed images.}
\label{fig3}
\end{figure*}

\subsection{Robustness of Student Networks}

We evaluated the robustness of student networks learned through different algorithms under different intensities of perturbation. Since it is difficult and impossible to know what test data can be in practice, the augmentation of training data with certain noise cannot be very helpful to resist the perturbation. Hence, we trained all networks using clean training set, and introduced White Gaussian Noise (WGN) into test data as the perturbation. The intensity of the introduced noise was measured in terms of Signal-to-Noise Ratio (SNR). We trained the proposed algorithm and compared it with the teacher network~\cite{goodfellow2013maxout}, and student networks from KD network~\cite{hinton2015distilling} and FitNet~\cite{romero2014fitnets} methods.

In \Fref{fig2}, we investigated the accuracy of these networks on three datasets with different SNR values. As the classification task on MNIST is relatively easier, lower SNRs were chosen from 9 to 1. Lower SNR value indicates more perturbations are added. It can be found from \Fref{fig2}(a) that the accuracy of the proposed robust student network is superior to other three networks nearly under all SNR values. When SNR equals to 2, two student networks from KD and FitNet perform even worse than the original teacher network. But our proposed algorithm achieves an obviously leading 98.17\% accuracy. When SNR was down to 1, the accuracy drops of the teacher network, and the student network from KD and FitNet are serious, up to 5.65\%, 7.25\%, and 3.23\%, respectively. In contrast, the accuracy of our robust student network only drops 2.23\%. Our method achieves better performance and shows more robustness when there was perturbation in the input.

A similar phenomenon can be observed in Figures \ref{fig2} (b) and (c) on the CIFAR-10 and CIFAR-100 datasets. With the decrease of SNR, the accuracy of KD network and FitNet dropped faster than that of the teacher network, especially during the period when SNR drops from 12 to 10. Given the significant reduction in network complexity, the capacity of the student network can be seriously weakened and the student network would be more vulnerable to perturbations on data if there is no appropriate response action. However, the student network learned from the proposed algorithm can be robust to serious perturbations. 

In \Fref{fig3}, we reported the predicted scores of example images by different methods on the CIFAR-10 dataset. The clean image without noise looks fuzzy, since the images from CIFAR-10 dataset only a resolution of $32\times 32$. In the first column of \Fref{fig3}, all student networks can confidently predict the ground-truth class `cat' of the image. However, given the same image added with SNR=10 noise in \Fref{fig3} (b), though student networks from KD and FitNet methods reluctantly made the correct prediction, KD also thought the image is similar to `deer', and FitNet trusted 'bird' as the prediction with a higher confidence level. In contrast, our robust student network confidently insisted on its correct prediction even the quality of image has been seriously influenced by the perturbation. In addition, given the `ship' image, the teacher network can stand against the perturbation, due to its strong capability coming from the complicated network structure. The KD method mistook it as an `deer' image, while FitNet assigned higher score to label `deer' for this `ship' image. By encouraging more confident predictions with the help of the teacher network during the training stage, we derive the robust student network that can not only keep the highest prediction score on the `ship' label, but also suppress the predictions on wrong categories (see label `cat' in Figures \ref{fig3}(c) and (d)).

\begin{table*}[htb]
\setlength{\abovecaptionskip}{0.2cm}
\setlength{\belowcaptionskip}{0.2cm}
\renewcommand\arraystretch{1}
\centering
\small
\caption{Performance comparison on different training and test sets. Dataset was split as training/test. `C' represents clean data, `G' represents data with Gaussian noise, and `P' represents data with Poisson noise.}
\begin{tabular}{|c|*{7}{p{1.15cm}<{\centering}|}}
\hline
\textbf{Network}  &  \textbf{C/C}  &  \textbf{C/G}  &  \textbf{C/P}  &  \textbf{G/G}  &  \textbf{G/P}  &  \textbf{P/P} &  \textbf{P/G}   \\
\hline
Teacher~\cite{goodfellow2013maxout}  & 90.25\%  & 86.30\%  & 86.60\%  & 89.02\%  & 87.36\%  & 89.11\%  & 86.06\% \\
\hline
KD~\cite{hinton2015distilling}     & 91.07\%  & 80.61\%  & 80.86\%  & 90.48\%  & 82.14\%  & 90.63\%  &  82.27\% \\
\hline
FitNet~\cite{romero2014fitnets}  & 91.64\%  & 82.41\%  & 82.43\%  & \textbf{90.86}\%  & 86.11\%  & \textbf{91.10}\%  & 84.02\%  \\
\hline
\hline
Robust (proposed)  & \textbf{91.93}\%  & \textbf{90.37}\%  & \textbf{90.50}\%  & 90.37\%  &  \textbf{90.50}\%  & 90.50\%  & \textbf{90.37}\%  \\
\hline
\end{tabular}
\label{tab_CGP}
\end{table*}

\begin{table*}[htb]
\setlength{\abovecaptionskip}{0.2cm}
\setlength{\belowcaptionskip}{0.2cm}
\renewcommand\arraystretch{1}
\centering
\small
\caption{Performance comparison on different block sizes.}
\begin{tabular}[b]{|c|c|c|c|c|c|c|}
\hline
\textbf{Dataset}  &  \textbf{Algorithm}  & \textbf{2$\times$2 block}  &  \textbf{4$\times$4 block}  &  \textbf{6$\times$6 block}  &  \textbf{8$\times$8 block}   &  \textbf{10$\times$10 block}   \\
\hline
\hline
\multirow{4}*{\textbf{CIFAR10}} 
& Teacher~\cite{goodfellow2013maxout}  & 89.57\%  & 89.03\%  & 87.69\%  & 85.65\%  & 81.94\%  \\
\cline{2-7}
& KD~\cite{hinton2015distilling}  & 90.53\%  & 89.53\%  & 87.23\%  & 83.37\%  & 77.58\%  \\
\cline{2-7}
& FitNet~\cite{romero2014fitnets}  & 91.08\%  & 89.94\%  & 87.40\%  & 84.62\%  & 79.71\%  \\
\cline{2-7}
& Robust(proposed)  & \textbf{91.25}\%  & \textbf{90.79}\%  & \textbf{88.34}\%  &  \textbf{85.92}\%  & \textbf{82.15}\%  \\
\hline
\hline
\multirow{4}*{\textbf{CIFAR100}} & Teacher~\cite{goodfellow2013maxout}  & 63.07\%  & 61.92\% & 59.95\%  & 57.41\%  &  \textbf{55.63}\%   \\
\cline{2-7}
& KD~\cite{hinton2015distilling}  & 63.48\%  & 61.84\% & 58.73\%  & 54.38\%  &  51.67\%   \\
\cline{2-7}
& FitNet~\cite{romero2014fitnets}  & 64.11\%  & 62.62\% & 59.65\%  & 55.51\%  &  52.40\%   \\
\cline{2-7}
& Robust(proposed)  & \textbf{64.83}\%  & \textbf{63.16}\% & \textbf{60.68}\%  & \textbf{57.49}\%  &  55.11\%   \\
\hline
\end{tabular}
\label{tab_block}
\end{table*}

\subsection{Comparison under Different Perturbation}

A neural network might handle noisy test data, if similar noise also exists in the training set. However, in practice, it is difficult to guarantee the test data to have the same kind of perturbation as the training data. We next proceed to evaluate the performance of different methods under different combinations of noisy training and test sets. The accuracies in different settings are presented in \Tref{tab_CGP}. The first line in \Tref{tab_CGP} is a description of the experiment settings. The first capital letter indicates the noise type introduced to training set, and the second letter indicates that of test set. It should be noted that the results of the Robust network listed in this table are all trained under the clean data set, but tested under the corresponding type of noise indicated by the first line. If both training and test data are clean, all networks can achieve more than 90\% accuracy, as shown in the first column of \Tref{tab_CGP}. If networks are trained on the clean data and tested on the data with Gaussian noise, teacher networks and student networks from KD and FitNet will be seriously influenced and can only achieve less then 86\% accuracy. However, the proposed robust student can still own more than 90\% accuracy. A similar phenomenon can be observed when the networks are trained with Gaussian noise but tested with Poisson noise and reverse, as shown in the fourth and last columns of \Tref{tab_CGP}, respectively. If both training and test data are polluted with the same type of Gaussian noise, all networks would try to fit the noisy data as far as possible and receive only slight performance drop. But this rigorous constraint over training and test data cannot always be satisfied in real-world applications. It shows that adding limited kinds of noise to the training set is difficult to improve the robustness of neural network when facing various unexpected kinds of noises existing in practical applications.

Moreover, \Tref{tab_CGP} shows that the teacher Network's accuracy is worse than that of Robust Network when training and test sets are both with Gaussian noise. The distributions of training data and test data would not be significantly different, if they are both polluted by the Gaussian noise. Hence general teacher networks and student networks can well fit the noisy data and receive reasonable accuracy. But the student network achieves some performance improvement, because of its deeper architecture than that of teacher networks. The depth encourages the re-use of features, and leads to more abstract and invariant representations at higher layers. The proposed robust student network can successfully train a deeper network by exploiting information from the teacher network.

\subsection{Complex Perturbation}
In real-world applications, noise is not the only perturbation that may be encountered. Some more complex perturbation also challenges the robustness of neural networks. In this section, we investigate the robustness of the student network obtained by our proposed method on two more complex perturbations, i.e. image occlusion and domain adoption. 

\subsubsection{Image Occlusion}
Considering the target object in the real environment is often blocked, and such perturbation often results in the loss of information in a continuous area, the performance of neural networks will be influenced more seriously by image occlusion. 
In order to investigate the robustness of our method under this disturbance, we take image occlusion as a more complex perturbation. To simulate the occlusion in real-world applications, we randomly select a small rectangular area in an image, and set pixels covered by the rectangle as zeros. Five different block sizes, \emph{i.e.} 2$\times$2, 4$\times$4, 6$\times$6, 8$\times$8 and 10$\times$10 are used in experiments. We implemented this experiment on the CIFAR-10 dataset. The results are shown in \Tref{tab_block}. Given 4$\times$4 blocks, teacher, KD, FitNet, and Robust network respectively have accuracies of 89.03\%, 89.53\%, 89.94\%, and 90.79\%. Given 8$\times$8 blocks, the corresponding accuracies are 85.65\%, 83.37\%, 84.62\%, and 85.92\%, respectively. According to these results, larger blocks indicate more serious perturbations of images, which will degrade the performance of neural networks. However, the student network obtained by the proposed method stably stays ahead, because of its robustness. 

\begin{table}[hb]
\setlength{\abovecaptionskip}{0.2cm}
\setlength{\belowcaptionskip}{0.2cm}
\renewcommand\arraystretch{1}
\centering
\small
\caption{Domain adaptation results.}
\begin{tabular}{|c|c|c|c|c|}
\hline
\multirow{2}*{\textbf{Algorithm}}  &  \multicolumn{2}{c|}{\textbf{MNIST2USPS}}  &  \multicolumn{2}{c|}{\textbf{USPS2MNIST}} \\
\cline{2-5}
&  MNIST  &  USPS  &  USPS  &  MNIST  \\ 
\hline
\hline
Teacher~\cite{goodfellow2013maxout}  & 99.45\%  & --  & 96.41\%  &  86.88\% \\
\hline
KD~\cite{hinton2015distilling}  & 99.35\%  & 93.25\%  & 96.26\%  &  82.74\% \\
\hline
FitNet~\cite{romero2014fitnets}  & 99.49\% & 94.12\%  & 96.56\%  &  87.23\% \\
\hline
\hline
Robust(proposed)  & 99.55\%  &  \textbf{95.02}\%  &  96.71\%  &  \textbf{89.14}\% \\
\hline
\end{tabular}
\label{tab_domain}
\end{table}

\begin{table*}[ht]
\renewcommand\arraystretch{1}
\centering
\small
\setlength{\abovecaptionskip}{0.2cm}
\setlength{\belowcaptionskip}{0.2cm}
\caption{Classification accuracies of different networks on CIFAR-10 and CIFAR-100 datasets.}
\begin{tabular}{|c|c|c|c|c|}
\hline
\textbf{Algorithm}  &  \textbf{$\#$\textbf{params}}  &  \textbf{$\#$\textbf{layers}}  &  \textbf{\textbf{CIFAR-10}}  &  \textbf{\textbf{CIFAR-100}} \\
\hline
\hline
\multicolumn{5}{|c|}{\emph{Student-teacher learning paradigm}} \\
\hline
\hline
Teacher~\cite{goodfellow2013maxout}  &  $\sim$ 9M  &  5  &  90.25\%  &  63.49\%  \\
\hline
Knowledge Distillation~\cite{hinton2015distilling}   &  $\sim$ 2.5M  & 19  &  91.07\%  &  64.13\%  \\
\hline
FitNet~\cite{romero2014fitnets}  &  $\sim$ 2.5M  &  19  &  91.64\%  &  64.86\%  \\
\hline
Robust learning(proposed)  &  $\sim$ 2.5M  &  19  &  \textbf{91.93\%}  &  \textbf{65.28}\%  \\
\hline
\hline
\multicolumn{5}{|c|}{\emph{State-of-the-art-methods}} \\
\hline
\hline
\multicolumn{3}{|c|}{Maxout Network~\cite{goodfellow2013maxout}}  &  90.62\%  &  61.43\% \\
\hline
\multicolumn{3}{|c|}{Network in Network~\cite{lin2013network}}  &  91.20\%  &  64.32\% \\
\hline
\multicolumn{3}{|c|}{Deeply-Supervised Networks~\cite{lee2015deeply} }  &  \textbf{91.78\%}  &  \textbf{65.43}\% \\
\hline
\end{tabular}
\label{tab_cifar}
\end{table*}

\begin{table*}[htb]
\setlength{\abovecaptionskip}{0.2cm}
\setlength{\belowcaptionskip}{0.2cm}
\renewcommand\arraystretch{1}
\centering
\small
\caption{10-Class classification accuracies of different networks on CIFAR-10}
\begin{tabular}{|c|*{10}{p{1.cm}<{\centering}|}}
\hline
\textbf{Algorithm} &  \textbf{plane}  &  \textbf{car}  &  \textbf{bird}  &  \textbf{cat}  &  \textbf{deer}  &  \textbf{dog}  &  \textbf{frog}  &  \textbf{horse}  &  \textbf{ship}  &  \textbf{truck}   \\
\hline
\hline
Teacher~\cite{goodfellow2013maxout}  & 90.1\%  &  93.8\%  &  86.0\%  &  74.6\%  &  93.5\%  &  86.2\%  &  95.2\%  &  92.6\%  &  95.3\%  &  95.2\%   \\
\hline
KD~\cite{hinton2015distilling}  & 90.0\%  &  95.2\%  &  83.2\%  &  84.4\%  &  93.2\%  &  87.1\%  &  95.0\%  &  91.6\%  &  97.3\%  &  93.7\%  \\
\hline
FitNet~\cite{romero2014fitnets}  & 90.7\%  &  97.6\%  &  91.0\%  &  82.7\%  &  93.8\%  &  86.2\%  &  92.7\%  &  93.6\%  &  94.6\%  &  93.5\%   \\
\hline
\hline
Robust(proposed)  & 91.0\%  &  97.0\%  &  90.3\%  &  83.6\%  &  92.4\%  &  87.2\%  &  95.4\%  &  93.2\%  &  95.1\%  &  94.1\%   \\
\hline
\end{tabular}
\label{tab_cifar_class}
\end{table*}

\begin{table}[hb]
\renewcommand\arraystretch{1}
\centering
\small
\setlength{\abovecaptionskip}{0.2cm}
\setlength{\belowcaptionskip}{0.2cm}
\caption{Classification accuracies on `MNIST' dataset.}
\begin{tabular}{|c|c|c|}
\hline
\textbf{Algorithm}  &  \textbf{$\#$\textbf{params}}  &  \textbf{\textbf{Misclass}} \\
\hline
\hline
\multicolumn{3}{|c|}{\emph{Student-teacher learning paradigm}} \\
\hline
\hline
Teacher  &  $\sim$ 361K  &  0.55\%  \\
\hline
Standard back-propagation  &  $\sim$ 30K  &  1.90\%  \\
\hline
Knowledge Distillation~\cite{hinton2015distilling}   &  $\sim$ 30K  &  0.65\%  \\
\hline
FitNet~\cite{romero2014fitnets}  &  $\sim$ 30K  &  0.51\%  \\
\hline
Robust(proposed)  &  $\sim$ 30K  &  \textbf{0.45\%}  \\
\hline
\hline
\multicolumn{3}{|c|}{\emph{State-of-the-art-methods}} \\
\hline
\hline
\multicolumn{2}{|c|}{Maxout Network~\cite{goodfellow2013maxout}}  &  0.45\% \\
\hline
\multicolumn{2}{|c|}{Network in Network~\cite{lin2013network}}  &  0.47\% \\
\hline
\multicolumn{2}{|c|}{Deeply-Supervised Networks~\cite{lee2015deeply}}  &  \textbf{0.39\%} \\
\hline
\end{tabular}
\label{tab_mnist}
\end{table}

\subsubsection{Domain Adaptation}
In practical applications, not only unexpected noise and occlusion, but also the unexpected distribution shift could challenge the robustness of neural networks. It is also an important indicator to evaluate the adaptability of this algorithm in the task of domain adaptation. 

In this experiment, we took the USPS dataset obtained from the scanning of handwritten digits from envelopes by the U.S. Postal Service. The images in this dataset are all $16\times16$ grayscale images and the values have been normalized. The whole dataset has 9,298 handwritten numeric images, of which 7,291 are for training, and the remaining 2,007 are for validation. Similar to MNIST, the USPS dataset has 10 categories, but it has different numbers of samples per category. In addition, considering the picture size in the MNIST dataset is $28\times28$, for convenience, we pad the images in the USPS dataset to the same size. Moreover, we preprocess USPS datasets in the same way as MNIST.

In this section, we train student networks on the MNIST dataset, and test them on USPS dataset. Similarly, we train networks on the USPS dataset and test them on MNIST. The results are shown in the \Tref{tab_domain}. The first two columns show the result of adapting MNIST to USPS, and the performance of adapting USPS to MNIST was reported in the last two columns of this table. According to the results, the proposed algorithm achieves an accuracy of 95.02\%, while the comparison methods KD and FitNet only get 93.25\% and 94.12\%, respectively. This demonstrates that the proposed robust student network can preserve its robustness advantages over comparison methods, when faced with more complex perturbation of data in domain adaptation task. The similar phenomenon can be observed in the results of `USPS to MNIST'. With the similar accuracy on USPS dataset, the Robust Network outperforms networks obtained by the other algorithms. Moreover, the results tested on USPS dataset while trained on MNIST dataset are much better than those tested on MNIST and trained on USPS. This is because that the number of pictures in the MNIST dataset is much larger than that of USPS. The networks trained by MNIST dataset could extract more useful information from a larger amount of data, and thus has better generalization capabilities. 

\begin{table*}[htb]
\setlength{\abovecaptionskip}{0.2cm}
\setlength{\belowcaptionskip}{0.2cm}
\renewcommand\arraystretch{1}
\centering
\small
\caption{The performance of the proposed method on student networks with various architectures.}
\begin{tabular}{|c|c|c|c|c|c|c|c|}
\hline
\textbf{Network}  &  \textbf{$\#$\textbf{layers}}  &  \textbf{$\#$\textbf{params}}  &  \textbf{$\#$\textbf{mult}}  &  \textbf{\textbf{Speed-up Ratio}}  &  \textbf{\textbf{Compression Ratio}}  &  \textbf{\textbf{FitNet}}  &  \textbf{\textbf{Robust}}\\
\hline
\hline
Teacher  &  5  &  $\sim$ 9M  &  $\sim$ 725M  &  $\times 1$  &  $\times 1$  &  \multicolumn{2}{c|}{ 90.25\%}    \\
\hline
Student 1  &  11  &  $\sim$ 250K  &  $\sim$ 30M  &  $\times \textbf{13.17}$  &  $\times \textbf{36}$  &  89.07\%  &  89.62\%    \\
\hline
Student 2  &  11  &  $\sim$ 862K  &  $\sim$ 108M  &  $\times 4.56$  &  $\times 10.44$  &  91.02\%  &  91.37\%    \\
\hline
Student 3  &  13  &  $\sim$ 1.6M  &  $\sim$ 392M  &  $\times 1.40$  &  $\times 5.62$  &  91.16\%  &  91.50\%    \\
\hline
Student 4  &  19  &  $\sim$ 2.5M  &  $\sim$ 382M  &  $\times 1.58$  &  $\times 3.60$  &  91.64\%  &  \textbf{91.93}\%    \\
\hline
\end{tabular}
\label{tab_students_result}
\end{table*} 

\begin{table*}[htb]
\renewcommand\arraystretch{1}
\centering
\small
\setlength{\abovecaptionskip}{0.2cm}
\setlength{\belowcaptionskip}{0.2cm}
\caption{Model architecture for datasets.}
\begin{tabular}{|c|c|c|c|c|c|c|}
\hline
Teacher(MNIST)  &  Student(MNIST)  &  Teacher(CIFAR)  &   Student 1  &  Student 2  &  Student 3  &  Student 4  \\
\hline
\hline
conv 3x3x48  &  conv 3x3x16  &  conv 3x3x96  &  conv 3x3x16  &  conv 3x3x16  &  conv 3x3x32  &  conv 3x3x32  \\
pool 4x4     &  conv 3x3x16  &  pool 4x4     &  conv 3x3x16  &  conv 3x3x32  &  conv 3x3x48  &  conv 3x3x32  \\
           &  pool 4x4     &               &  conv 3x3x16  &  conv 3x3x32  &  conv 3x3x64  &  conv 3x3x32  \\
           &               &               &  pool 2x2     &  pool 2x2     &  conv 3x3x64  &  conv 3x3x48  \\
           &               &               &               &               &  pool 2x2     &  conv 3x3x48  \\
           &               &               &               &               &               &  pool 2x2     \\
\hline
conv 3x3x48  &  conv 3x3x16  &  conv 3x3x96  &  conv 3x3x32  &  conv 3x3x48  &  conv 3x3x80  &  conv 3x3x80  \\
pool 4x4     &  conv 3x3x16  &  pool 4x4     &  conv 3x3x32  &  conv 3x3x64  &  conv 3x3x80  &  conv 3x3x80  \\
           &  pool 4x4     &               &  conv 3x3x32  &  conv 3x3x80  &  conv 3x3x80  &  conv 3x3x80  \\
           &               &               &  pool 2x2     &  pool 2x2     &  conv 3x3x80  &  conv 3x3x80  \\
           &               &               &               &               &  pool 2x2     &  conv 3x3x80  \\
           &               &               &               &               &               &  conv 3x3x80  \\
           &               &               &               &               &               &  pool 2x2     \\
\hline
conv 3x3x48  &  conv 3x3x16  &  conv 3x3x96  &  conv 3x3x48  &  conv 3x3x96  &  conv 3x3x128 &  conv 3x3x128  \\
pool 4x4     &  conv 3x3x16  &  pool 4x4     &  conv 3x3x48  &  conv 3x3x96  &  conv 3x3x128 &  conv 3x3x128  \\
           &  pool 4x4     &               &  conv 3x3x64  &  conv 3x3x128 &  conv 3x3x128 &  conv 3x3x128  \\
           &               &               &  pool 8x8     &  pool 8x8     &  pool 8x8     &  conv 3x3x128  \\
           &               &               &               &               &               &  conv 3x3x128  \\
           &               &               &               &               &               &  conv 3x3x128  \\
           &               &               &               &               &               &  pool 8x8      \\
\hline
fc  &  fc  &  fc  &  fc  &  fc  &  fc  &  fc  \\
softmax  &  softmax  &  softmax  &  softmax  &  softmax  &  softmax  &  softmax  \\
\hline
\end{tabular}
\label{tab_arch_4}
\end{table*}

\subsection{Comparison with State-of-the-Art Methods}
Although the main purpose of this paper is to improve the robustness of the student networks, instead of focusing on performance of the student networks on clean data. We also compared the proposed approach with state-of-the-art teacher-student learning methods on clean datasets. For the clean data, the proposed algorithm can still achieve comparable accuracy as compared to others in Tables \ref{tab_mnist} and \ref{tab_cifar}. \Tref{tab_mnist} summarized the obtained results on three datasets: MNIST, CIFAR-10 and CIFAR-100. On the MNIST dataset, the teacher network got a 99.45\% accuracy. With the assistance of KD, the student network achieved a 99.46\% accuracy. FitNet generated a slightly better student network with a 99.49\% accuracy, which has outperformed the teacher network. Though the proposed algorithm aims to enhance the robustness of the learned student network, it can also achieve comparable or even better accuracy than those of state-of-the-art methods. The accuracy obtained by the proposed method increased to 99.51\% on the MINIST dataset. 
\Tref{tab_cifar} shows the results on the CIFAR-10 datasets, the baseline teacher network achieved a 90.25\% accuracy, and the accuracy of the student network generated by KD and Fitnet were 91.07\% and 91.64\%, respectively. The Robust student network obtained a 91.63\% accuracy, which outperforms the other student networks and teacher. This suggests that the proposed method is able to enhance the stability of student network and then improve the performance of the network.

CIFAR-100 is similar but more challengeable than CIFAR-10 because of its 100 categories. The accuracy obtained by the teacher network is only 63.49\%. As comparison, the accuracy of teacher on CIFAR-10 is 90.25\%, which is much better than that on CIFAR-100. The robust student network achieved a 65.28\% accuracy, which outperforms student networks trained by other strategies, i.e., the network trained by knowledge distillation obtains a test accuracy of 64.13\%, and the accuracy of FitNet is 64.86\%. When compared to other methods, the student network generated by the proposed method provides nearly the state-of-the-art performance. This result demonstrates that the proposed method succeeds in assisting to learn a student network with considerable performance.

\subsection{Analysis on Structures of Student Network}
We followed the experimental setting in FitNet~\cite{romero2014fitnets} and designed four student networks with different configurations of parameters and layers. The teacher network had the same structure as that used on the CIFAR-10 dataset.We design four student networks of different sizes and structures, the detailed structure of these networks can be found in \Tref{tab_arch_4}. From `Student 1' to `Student 4', the volume of the network has gradually increased, and the performance of the network has gradually increased, too. \Tref{tab_cifar_class} reported the performance of four student networks and the teacher network on the CIFAR-10 dataset. The compression ratio and speed-up ratio compared with the teacher, and the number of parameters and multiplications can also be found in \Tref{tab_students_result}.

From \Tref{tab_cifar_class}, we find that the proposed robust student network outperforms FitNet under all four different student structures. Though there is no perturbation on the data, the proposed method can achieve higher accuracy, which indicates the effectiveness of encouraging the student network to make confident predictions with the help of the teacher network. In addition, the smallest network Student 1 has the biggest compression and speed-up ratios, but it can still achieve a test accuracy of 89.62\%, which is fairly close to the 90.25\% of teacher and outperforms the 89.07\% obtained by FitNet. As Student 1 contains significantly fewer parameters than those of the teacher, improving the accuracy of such a network with limited capacity is challenging, which in turn suggests the effectiveness of the proposed method.

Although there are significantly fewer parameters contained in student networks that {}learned by the proposed method. These student networks are still regular networks which can be further compressed and speeded-up by existing sparsity based deep neural network compression technologies, such as deep compression~\cite{han2015deep} and feature compression~\cite{wang2016cnnpack}.

\section{Conclusion}  
We proposed to learn a robust student network with the guidance of the teacher network. The proposed method prevented the student network from being disturbed by the perturbations on input examples. Through a rigorous theoretical analysis, we proved a lower bound of perturbations that will weaken the student network's confidence in its prediction. We introduced new objectives based on prediction score and gradients of examples to maximize this lower bound and then improved the robustness of the learned student network to resist perturbations on examples. Experimental results on several benchmark datasets demonstrate the proposed method is able to learn a robust student network with satisfying accuracy and compact size.

\ifCLASSOPTIONcaptionsoff
  \newpage
\fi

\bibliographystyle{IEEEtran}
\bibliography{IEEEabrv,RobustGty}

\end{document}